\documentclass[preprint,12pt]{elsarticle}

\usepackage[numbers]{natbib}

\usepackage{amssymb}

\usepackage{amsmath}

\usepackage{mathtools}
\usepackage{amsthm}
\usepackage{multicol}
\usepackage{amsthm}
\usepackage{multirow}
\DeclareMathOperator*{\mean}{\mathbb{E}}
\DeclareMathOperator*{\var}{\mathbb{V}}
\DeclareMathOperator*{\argmin}{argmin}

\newcommand\given[1][]{\:#1\vert\:}
\usepackage{booktabs}
\usepackage{subfigure}
\usepackage{pifont}

\usepackage{graphicx}

\usepackage{soul}

\newcommand{\ie}{\textit{i.e.}}

\theoremstyle{plain}
\newtheorem{theorem}{Theorem}[section]

\newtheorem{lemma}[theorem]{Lemma}

\theoremstyle{definition}

\theoremstyle{remark}

\begin{document}

\begin{frontmatter}

\title{Towards a Better Evaluation of Out-of-Domain Generalization}

\author[naveraddress]{Duhun Hwang\fnref{eqaulcontr}}
\ead{duhun.hwang@navercorp.com} 

\author[snuaddress]{Suhyun Kang\fnref{eqaulcontr}}
\ead{su_hyun@snu.ac.kr} 

\author[lgaiaddress]{Moonjung Eo}
\ead{moonj@lgresearch.ai} 

\author[snuaddress]{Jimyeong Kim}
\ead{wlaud1001@snu.ac.kr} 

\author[snuaddress,snuaddress2]{Wonjong Rhee\corref{mycorrespondingauthor}}
\cortext[mycorrespondingauthor]{Corresponding author}
\ead{wrhee@snu.ac.kr}

\fntext[eqaulcontr]{Authors contributed equally as first author.}

\affiliation[naveraddress]{organization={Shopping Foundation Models Team, NAVER},
            country={South Korea}}
\affiliation[snuaddress]{organization={Department of Intelligence and Information, Seoul National University},
            country={South Korea}
            }
\affiliation[lgaiaddress]{organization={Data Intelligence Lab., LG AI Research},
            country={South Korea}}
\affiliation[snuaddress2]{organization={IPAI and RICS, Seoul National University}, 
            country={South Korea}}

\begin{abstract} 
The objective of Domain Generalization (DG) is to devise algorithms and models capable of achieving high performance on previously unseen test distributions. In the pursuit of this objective, average measure has been employed as the prevalent measure for evaluating models and comparing algorithms in the existing DG studies. Despite its significance, a comprehensive exploration of the average measure has been lacking and its suitability in approximating the true domain generalization performance has been questionable. In this study, we carefully investigate the limitations inherent in the average measure and propose worst+gap measure as a robust alternative. We establish theoretical grounds of the proposed measure by deriving two theorems starting from two different assumptions. We conduct extensive experimental investigations to compare the proposed worst+gap measure with the conventional average measure. Given the indispensable need to access the true DG performance for studying measures, we modify five existing datasets to come up with SR-CMNIST, C-Cats\&Dogs, L-CIFAR10, PACS-corrupted, and VLCS-corrupted datasets. The experiment results unveil an inferior performance of the average measure in approximating the true DG performance and confirm the robustness of the theoretically supported worst+gap measure. 
\end{abstract}

\begin{keyword}
Domain generalization \sep evaluation measure \sep out-of-domain generalization 

\end{keyword}

\end{frontmatter}

\section{Introduction}
\label{sec:introduction}

While deep learning has achieved remarkable success across diverse domains, it remains susceptible to the challenge of Domain Generalization (DG). DG encapsulates the predicament of devising models that can effectively extend their performance to unseen test datasets. Therefore, DG involves the task of creating models capable of exhibiting robust generalization in previously unseen environments.
As formally stated in \cite{arjovsky2019invariant, ahuja2021invariance,robey2021model}, the goal of DG algorithms is to use multiple training datasets to find a model $f$, as follows:
\begin{equation} 
\label{eq:goal}
    \argmin_{f}\max\limits_{e \in \mathcal{E}_{\text{all}}}\mathcal{R}_{e}(f),
\end{equation}
where $\mathcal{R}_{e}(f)$ is the error rate in environment $e$ and $\mathcal{E}_{\text{all}}$ is the set of all possible environments including the training environments.

To address the DG problem, previous studies~\cite{ahuja2021invariance,pezeshki2021gradient,mahajan2021domain,zhou2021examining} have exploited the concepts of spurious correlation that should be discarded and invariant correlation that should be learned.
Furthermore, recent studies have proposed various algorithms to learn invariant correlations.
Some have tried to learn the environment-invariant representations~\cite{arjovsky2019invariant, sagawa2019distributionally,sun2016deep,li2018domain,ganin2016domain,li2018deep,blanchard2011generalizing,huang2020self, blanchard2021domain}.
Data augmentation techniques have also been exploited to avoid environment overfitting~\cite{yan2020improve, wang2020heterogeneous,nam2019reducing,krueger2021out}.
Other works have borrowed learning-to-learn strategies from meta-learning to deal with environment-shifts~\cite{li2018learning,zhang2020adaptive}.

Despite the extensive efforts, \citet{gulrajani2020search} demonstrated that a carefully implemented ERM~(Empirical Risk Minimization) can achieve a competitive performance when compared to the State-of-The-Art (SoTA) DG algorithms.
The competitive performance of ERM, however, is unexpected because ERM is known for its vulnerability to distributional shift~\cite{hashimoto2018fairness, zhai2022understanding, liu2021just}. 
To understand this paradoxical situation, we compare ERM with the true best algorithm in three different ways using ideal, worst+gap, and average as the measure. 
Ideal measure serves as the true DG performance (i.e., oracle performance) that is compatible with Eq.~(\ref{eq:goal}), and it will be formally defined in Section~\ref{sec:ideal_measure}.
Average measure has been predominantly used as the evaluation measure of DG~\cite{sun2016deep,krueger2021out,muandet2013domain,li2019episodic,seo2020learning,gulrajani2020search,ahuja2021invariance}, and it will be formally defined in Section~\ref{sec:average_measure}. 
The results are shown in Figure~\ref{figure:ERM}. When the ideal measure is used, ERM performs much worse than the best algorithm (Figure~\ref{figure:ERM}(a)). This is consistent with the known vulnerability of ERM. 
When the average measure is used, however, both algorithms are assessed to achieve a similar performance (Figure~\ref{figure:ERM}(c)). This is not because they truly perform at a similar level. It is simply because the average measure is not consistent with the ideal measure.

\begin{figure*}[t!]
  \centering
  \subfigure[True error rate]{
    \includegraphics[width=0.24\textwidth]{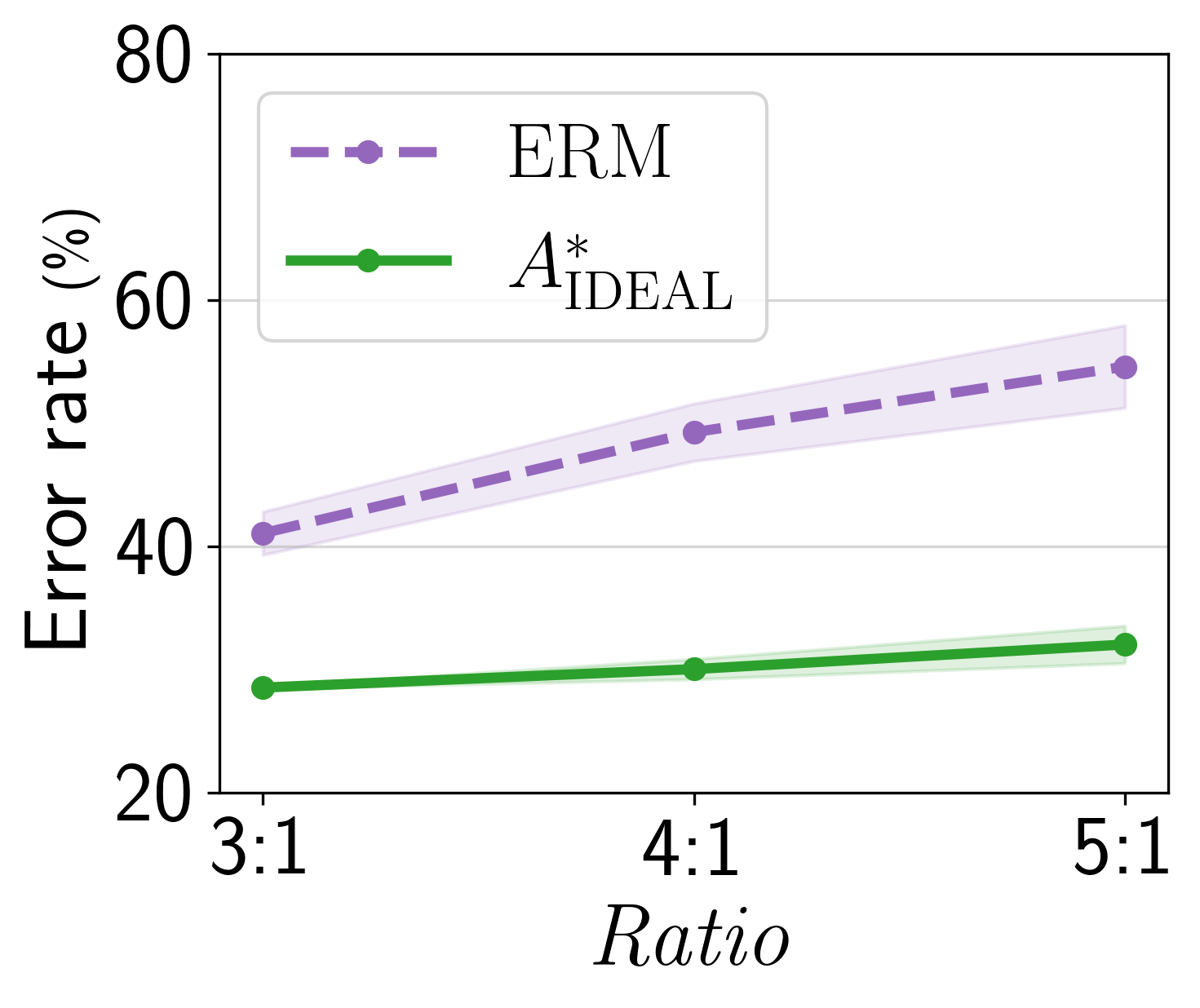}
  }
  \hspace{0.5cm}
  \subfigure[Worst+gap measure (ours)]{
    \includegraphics[width=0.24\textwidth]{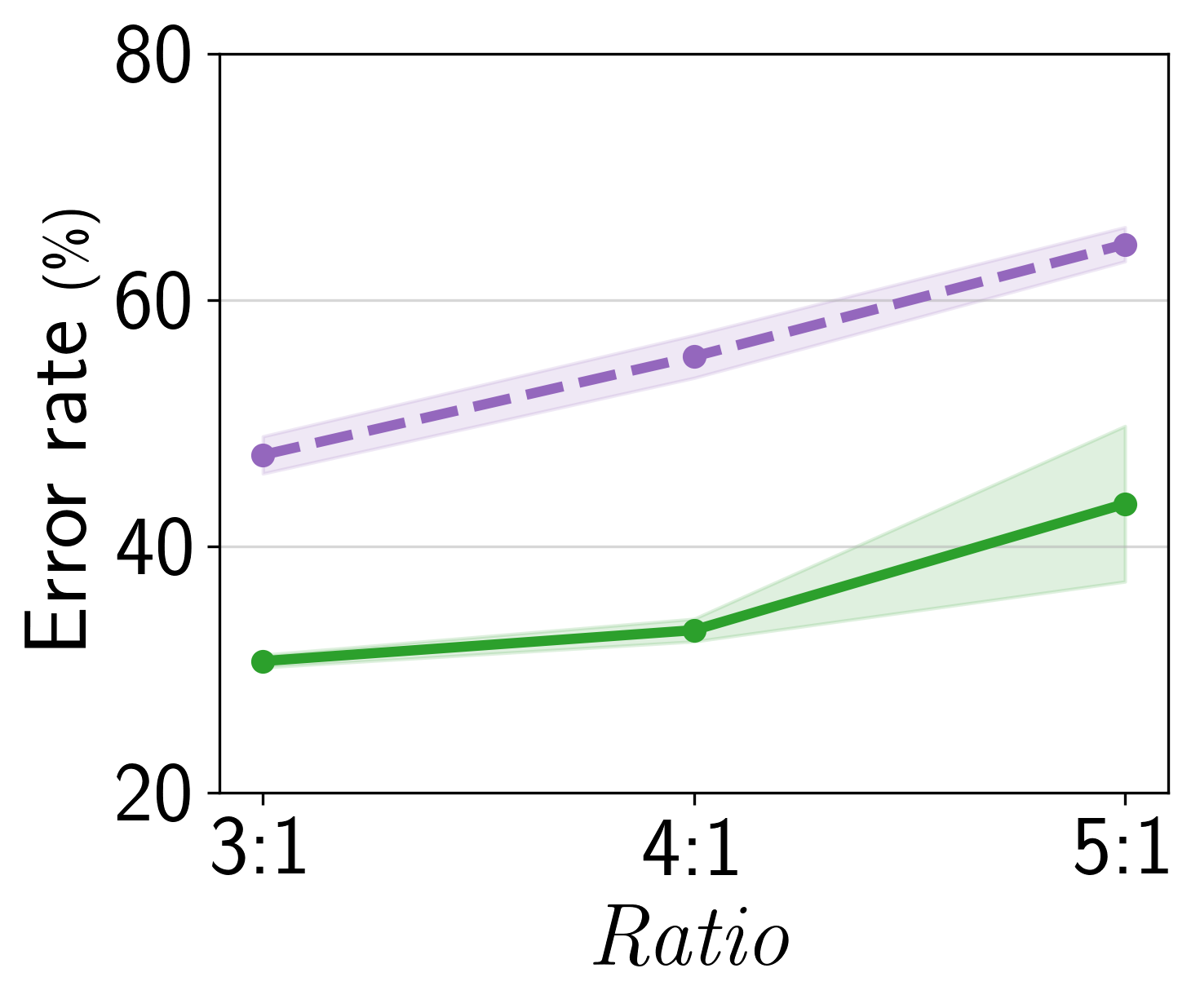}
  }
  \hspace{0.5cm}
  \subfigure[Average measure]{
    \includegraphics[width=0.24\textwidth]{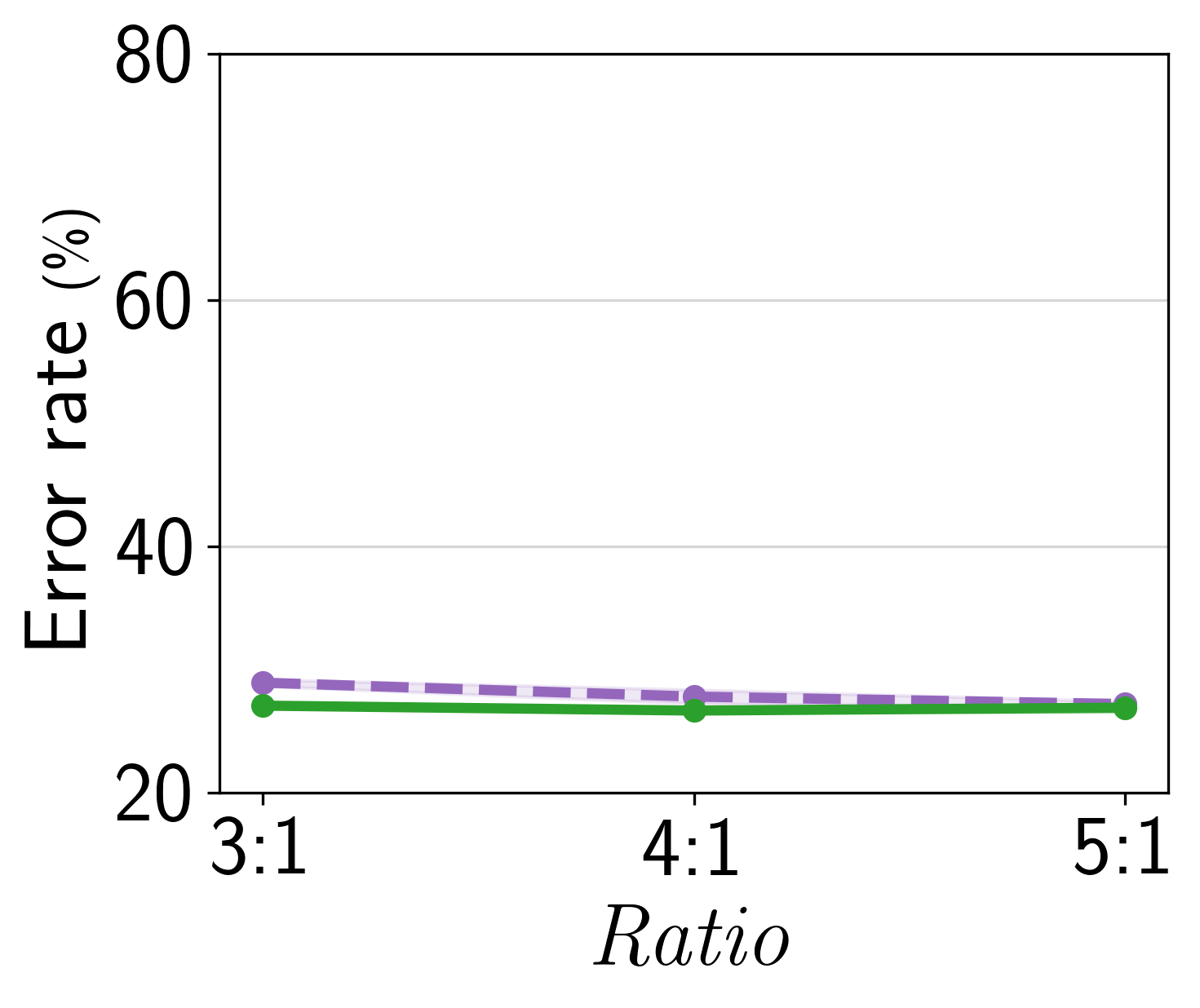}
  }
\caption{Comparison between ERM and the best performing algorithm~($A^{*}_{\text{IDEAL}}$) for different \textit{Ratio} configurations of SR-CMNIST dataset. For each plot, (a) true error rate, (b) worst+gap measure, or (c) average measure is used as the measure. 
Compared to the true error rate shown in (a), the average measure shown in (c) distorts the assessment. 
The results for \textit{Scale}$=4$ are shown. The results for the other \textit{Scale} values can be found in Section~\ref{sec:fulltable_erm}.
}
\label{figure:ERM}
\end{figure*}

Predictive power and invariance over domain shifts have been identified as the core properties to pursue when developing DG algorithms~\cite{ahuja2021invariance, arjovsky2019invariant, heinze2018invariant, krueger2021out, peters2016causal, ye2021towards}. 
Consequently, an effective DG measure should comprehensively account for both predictive power and invariance. Neglecting one of these properties can lead to unfavorable outcomes, as exemplified explicitly through the following two instances: 
1) First instance involves a measure placing exclusive emphasis on predictive power. Consider a dataset derived from CMNIST~\cite{arjovsky2019invariant}, where $\mathcal{E}=\{0.90, 0.85, 0.80\}$ are the given environments. The details of CMNIST can be found in Section~\ref{sec:SRCMNIST}. In this case, if the evaluation measure only focuses on predictive power, it is likely that a spurious model, capturing only the spurious correlation, will be chosen. This is because the spurious model can achieve a worst performance of $0.80$, whereas the invariant model, aligned with the invariant correlation, will achieve a performance of $0.75$.
2) Second instance involves a measure placing exclusive emphasis on invariance: We can consider a model with a uniform random output. In this case, the measure only focusing on invariance will likely select the random model because it produces close-to-zero gap over all domains.

We propose worst+gap measure, which encompasses both predictive power and invariance.
This measure can effectively avoid the pitfalls associated with concentrating on only one aspect. 
Upon using the worst+gap measure, a similar observation to the ideal measure arises as shown in Figure~\ref{figure:ERM}(b). 

The theoretical grounds of the worst+gap measure are established in Section~\ref{sec:our_measure}.
The worst+gap measure is derived from the ideal measure through two distinct theorems, each stemming from two simple yet reasonable assumptions. 
While these two assumptions are independent of each other, both theorems lead to a common insight: the essential need to include both worst term and gap term into the DG evaluation measure.

Because our work focuses on investigating measures, we need datasets with sufficiently many environments such that we can evaluate and compare the measures. The datasets used in our study are explained in 
Section~\ref{sec:our_benchmark}.
The experimental outcomes are provided in Section~\ref{sec:experiment}. 
First, we show that the worst+gap measure exhibits superior correlation with the ideal measure in comparison to the average measure. This observation is maintained consistently across a spectrum of 12 diverse configurations of SR-CMNIST (Scale and Ratio controllable CMNIST). 
Second, we repeat the correlation experiments utilizing C-Cats\&Dogs (Colored Cats\&Dogs) and L-CIFAR10 (CIFAR10 with colored Line).
Compared to SR-CMNIST, the datasets contain more realistic images.
Third, we examine the worst+gap measure on PACS-corrupted and VLCS-corrupted datasets. The two datasets are based on PACS and VLCS that are commonly used as DG benchmarks and they serve as real-world datasets in our work.
Finally, we provide discussions on three topics with analysis results in Section~\ref{sec:discussion}.

\section{Preliminaries}
\label{sec:preliminary_and_problem}

\subsection{Notations}
\label{sec:notation}

Let $e_{1},\cdots,e_{n},\cdots,e_{N},e_{N+1},\cdots,e_{N_{\mathcal{E}_{\text{all}}}}$ be a sequence of environments that are i.i.d. drawn from the environment distribution.
Let $\mathcal{E}_{\text{all}}$ be the set of all environments in the above sequence. 
Furthermore, let $\mathcal{E}_{\text{given}}=\{e_{1},\cdots,e_{n},\cdots,e_{N}\}$ be the set of given environments for selecting a learning algorithm and training the model.
Let $\mathcal{D}_{\mathcal{E}_{\text{given}}}=\{D_{e_{1}},\cdots,D_{e_{n}},\cdots,D_{e_{N}}\}$ be the set of given datasets from the set of given environments $\mathcal{E}_{\text{given}}$.
Let $\mathcal{A}$ be the candidate set of DG learning algorithms to be compared.
Let $f_{A}^{\mathcal{E}_{\text{given}}}$ be a model that is trained with a DG algorithm $A$ on the set of given datasets $\mathcal{D}_{\mathcal{E}_{\text{given}}}$.
Let $\mathcal{R}_{e}(f_{A}^{\mathcal{E}_{\text{given}}})$ be the error rate of the model $f_{A}^{\mathcal{E}_{\text{given}}}$ evaluated with $D_{e}$ as the single test dataset.

\subsection{Average measure}
\label{sec:average_measure}

We formally define the average measure.
As the number of given environments is usually small, leave-one-environment-out (LOO) test errors are used in the evaluation procedure. 
We divide the set of given datasets $\mathcal{D}_{\mathcal{E}_{\text{given}}}$ into a single dataset $D_{e_{n}}$ for test and the remaining datasets $\mathcal{D}_{\mathcal{E}_{\text{given}}\backslash e_{n}} = \{D_{e_{1}},\cdots, D_{e_{n-1}}, D_{e_{n+1}}, \cdots, D_{e_{N}}\}$ for training to calculate the LOO test errors.
We calculate the LOO test errors $\{\mathcal{R}_{e_{n}}(f_{A}^{\mathcal{E}_{\text{given}}\backslash e_{n}})\}_{n=1}^{N}$ with a fixed algorithm $A$, where $\mathcal{R}_{e_{n}}(f_{A}^{\mathcal{E}_{\text{given}}\backslash e_{n}})$ is the error rate under the dataset $D_{e_{n}}$ and $f_{A}^{\mathcal{E}_{\text{given}}\backslash e_{n}}$ is the model that is trained with the learning algorithm $A$ on the remaining $N-1$ datasets $\mathcal{D}_{\mathcal{E}_{\text{given}}\backslash e_{n}}$.
The average measure can be calculated using the LOO test errors $\{\mathcal{R}_{e_{n}}(f_{A}^{\mathcal{E}_{\text{given}}\backslash e_{n}})\}_{n=1}^{N}$, as follows:
\begin{equation}
\label{eq:conventional_measure}
\centering
\begin{split}
    \mathcal{R}^{\text{AVG}}_{A} = \frac{1}{N}\sum_{n=1}^{N}\mathcal{R}_{e_{n}}(f_{A}^{\mathcal{E}_{\text{given}}\backslash e_{n}}).
\end{split}
\end{equation}
Previous works have selected the algorithm with the lowest $\mathcal{R}^{\text{AVG}}_{A}$ to find the best-performing algorithm from the candidate algorithm set $\mathcal{A}$, as follows:
\begin{equation}
\label{eq:conventional_measure_argmin}
\centering
\begin{split}
    A^{*}_{\text{AVG}} = \argmin_{A \in \mathcal{A}} \mathcal{R}^{\text{AVG}}_{A},
\end{split}
\end{equation}
where $A^{*}_{\text{AVG}}$ denotes the algorithm with the lowest $\mathcal{R}^{\text{AVG}}_{A}$ among the candidate algorithms.

\subsection{Ideal measure}
\label{sec:ideal_measure}

We define the ideal measure which should be pursued as the ground-truth criterion when investigating DG evaluation measures. There are two fundamental aspects that are required for the ideal measure. First, unlike a practical evaluation measure that has access to only the datasets associated with $\mathcal{E}_{\text{given}}$, the ideal measure should have access to all the datasets associated with $\mathcal{E}_{\text{all}}$ such that the ground-truth performance of Eq.~(\ref{eq:goal}) can be evaluated for any given model $f$. Therefore, the evaluation of the ideal measure is not possible for real-world applications, and it is used only for the purpose of examining the reliability of the practical measures. Second, the ideal measure should be defined using the single model that is trained with the entire given datasets $\mathcal{D}_{\mathcal{E}_{\text{given}}}$, because that is the model to be trained and used in real-world applications upon the selection of the best learning algorithm $A^{*}$. 
Note that leave-one-environment-out procedure trains $N$ models, $\{f_{A}^{\mathcal{E}_{\text{given}}\backslash e_{n}}\}_{n=1}^{N}$, for each learning algorithm $A$. But the $N$ models are used only for selecting the best learning algorithm. The actual model to be used is trained with the entire given datasets $\mathcal{D}_{\mathcal{E}_{\text{given}}}$ using the selected best learning algorithm $A^{*}$.

Based on the two fundamental aspects, we define the ideal measure for a fixed learning algorithm $A$ as follows:
\begin{equation}
\label{eq:ideal_measure}
    \mathcal{R}^{\text{IDEAL}}_{A} = \max\limits_{e \in \mathcal{E}_{\text{all}}}\mathcal{R}_{e}(f_{A}^{\mathcal{E}_{\text{given}}}).
\end{equation}
The ideal measure $\mathcal{R}^{\text{IDEAL}}_{A}$ is almost the same as the DG goal in Eq.~(\ref{eq:goal}), but $f$ in Eq.~(\ref{eq:goal}) is replaced with $f^{\mathcal{E}_{\text{given}}}_{A}$ that is trained with $\mathcal{D}_{\mathcal{E}_{\text{given}}}$.

$\mathcal{R}^{\text{IDEAL}}_{A}$ can be considered as the true performance of an algorithm $A$.
Thus, the true best algorithm can be selected from the candidate algorithm set $\mathcal{A}$ by selecting an algorithm with the lowest $\mathcal{R}^{\text{IDEAL}}_{A}$, as follows:
\begin{equation}
\label{eq:ideal_measure_argmin}
    A^{*}_{\text{IDEAL}} = \argmin_{A \in \mathcal{A}} \mathcal{R}^{\text{IDEAL}}_{A},
\end{equation}
where $A^{*}_{\text{IDEAL}}$ denotes the algorithm with the lowest $\mathcal{R}^{\text{IDEAL}}_{A}$ among the candidate algorithms.
$\mathcal{R}^{\text{IDEAL}}_{A}$ cannot be accessed in real-world applications.
However, we propose five new datasets that allow the ideal measure to be accessible in Section~\ref{sec:our_benchmark} for the purpose of studying practical evaluation measures.

\section{Proposed measure and theoretical grounds}
\label{sec:our_measure}

We first propose a new evaluation measure named the worst+gap measure for DG. Subsequently, we establish two theorems that offer theoretical grounds for the inclusion of both the ``worst" and ``gap" components within the proposed measure.

\subsection{Definition of the worst+gap measure}
\label{sec:define_our_measure}
The worst+gap measure consists of the \textit{worst} error and the \textit{gap} between the worst and best errors. The worst error refers to the largest leave-one-environment-out test error, and the best error refers to the smallest. The definition is provided below.
\begin{equation}
\label{eq:our_measure}
    \begin{split}
    & \mathcal{R}^{\text{W+G}}(f^{\mathcal{E}_{\text{given}}}_A) =  \underbrace{\max\limits_{e_{n} \in \mathcal{E}_{\text{given}}} R_{e_{n}}(f_{A}^{\mathcal{E}_{\text{given}}\backslash e_{n}})}_{\text{Worst error}}
    \\ &+ \frac{1}{N-2} \underbrace{ \left( \max\limits_{e_{n} \in \mathcal{E}_{\text{given}}}R_{e_{n}}(f_{A}^{\mathcal{E}_{\text{given}}\backslash e_{n}})
    - \min\limits_{e_{n} \in \mathcal{E}_{\text{given}}}R_{e_{n}}(f_{A}^{\mathcal{E}_{\text{given}}\backslash e_{n}})
    \right).}_{\text{Gap between the worst error and the best error}}
    \end{split}
\end{equation}
The worst error and the gap can be considered as the predictive power and the environment invariance, respectively.

When the worst+gap measure is used, the algorithm with the lowest worst+gap measure should be selected from the candidate algorithm set $\mathcal{A}$, as follows:
\begin{equation}
\label{eq:wg_measure_argmin}
    A^{*}_{\text{W+G}} = \argmin_{A \in \mathcal{A}} \mathcal{R}^{\text{W+G}}_{A},
\end{equation}
where $A^{*}_{\text{W+G}}$ denotes the algorithm with the lowest $\mathcal{R}^{\text{W+G}}_{A}$ among the candidate algorithms.

\subsection{Theoretical results}
\label{sec:theory}

To establish a theoretical ground, we need to derive a practical measure that can be proven to approximate the true measure $\max\limits_{e \in \mathcal{E}_{\text{all}}} \mathcal{R}_{e}(f_{})$ in Eq.(\ref{eq:goal}) and can be evaluated using only the given datasets $\mathcal{D}_{\mathcal{E}_{\text{given}}}$. 
To derive the worst+gap measure, we have set two goals. First, adopt a simple yet reasonable assumption. Second, derive two different theorems starting from two different assumptions such that a more general insight can be obtained. Theorem~\ref{theorem:main_cheby} assumes a uniform distribution and Theorem~\ref{theorem:main_mcdiar} assumes a decreasing range. Despite the independence in the two assumptions, we will show that both theorems lead to a common insight -- both of the worst term and the gap term are required. 

For the two theorems, we simplify the proof by assuming a fixed model $f$. After deriving the two theorems, we will reconnect the theorems and the worst+gap measure by recovering from the simplification.

\subsection{Useful theorem and lemma}
\label{sec:useful_theorem}

This section delineates the theorem and lemma employed in the demonstration of our principal theorems, Theorem~\ref{theorem:main_cheby} and Theorem~\ref{theorem:main_mcdiar}.

\begin{theorem} (Chebyshev's inequality~\cite{markov1884certain,bienayme1853considerations}
    \label{thm:che_inequality})
    Let $X$ be a random variable with finite expected value $\mu$ and finite non-zero variance $\sigma^2$. Then for any real number $k>0$,
    \begin{equation}
        \text{\emph{Pr}}\bigg[|X-\mu|\geq k\sigma\bigg]\leq\frac{1}{k^2}
    \end{equation}
\end{theorem}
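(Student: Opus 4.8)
The plan is to deduce the inequality from the single defining property of variance, namely that $\sigma^2 = \var[X] = \mean[(X-\mu)^2]$. The key observation is that the two-sided tail event $\{|X-\mu| \geq k\sigma\}$ coincides \emph{exactly} with the one-sided event $\{(X-\mu)^2 \geq k^2\sigma^2\}$, because $k>0$ and $\sigma>0$ make squaring monotone on the relevant range. This reduces the statement to a tail bound on the non-negative random variable $Y := (X-\mu)^2$, which is precisely the Markov-type setting. Since Markov's inequality is not among the results stated earlier, I would inline its one-line argument to keep the proof self-contained rather than invoking it as a black box.

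Concretely, I would bound $\sigma^2$ from below by restricting the defining expectation to the tail region and then replacing the integrand by its lower bound there:
\begin{equation}
\sigma^2 = \var[X] = \mean\big[(X-\mu)^2\big] \geq \mean\big[(X-\mu)^2 \, \mathbf{1}_{\{|X-\mu| \geq k\sigma\}}\big] \geq k^2\sigma^2 \, \mathrm{Pr}\big[|X-\mu| \geq k\sigma\big].
\end{equation}
The first inequality discards the non-negative contribution of the complementary event, and the second uses that $(X-\mu)^2 \geq k^2\sigma^2$ holds on the tail event. Dividing through by $k^2\sigma^2$ then yields $\mathrm{Pr}[|X-\mu| \geq k\sigma] \leq 1/k^2$, which is the claim.

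There is no genuine obstacle here, as this is a classical result; the care lies in two bookkeeping points. First, to cover discrete, continuous, and mixed laws uniformly I would phrase the truncation through the abstract expectation operator (equivalently a Lebesgue--Stieltjes integral against the law of $X$) rather than assuming a density, so the indicator step needs no case analysis. Second, I would make the hypotheses do their job explicitly: finiteness of $\mu$ and $\sigma^2$ guarantees $Y$ is integrable so the expectation manipulations are legitimate, while $\sigma^2 \neq 0$ together with $k>0$ makes the final division by $k^2\sigma^2$ valid. Once these are recorded, the bound follows immediately.
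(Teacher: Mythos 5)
Your proof is correct: the truncation argument $\sigma^2 = \mean\big[(X-\mu)^2\big] \geq \mean\big[(X-\mu)^2\,\mathbf{1}_{\{|X-\mu|\geq k\sigma\}}\big] \geq k^2\sigma^2\,\Pr\big[|X-\mu|\geq k\sigma\big]$ is the canonical inline-Markov proof of Chebyshev's inequality, and your bookkeeping on the hypotheses (integrability justifying the expectation manipulations, and $k>0$ together with $\sigma^2\neq 0$ justifying the final division) is exactly right. Note that the paper states this theorem without any proof, citing it as a classical result, so your argument fills a gap rather than diverging from an existing one.
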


\begin{lemma} 
\label{lemma:expectation_variance_of_maximum_sup}
If $r_{k}$ is i.i.d. drawn from a uniform distribution $U(a, b)$, then  
    \begin{equation}
        \mean\limits_{r_{k} \sim U(a, b)} \bigg[ \max\limits_{1 \leq k \leq K} r_{k} \bigg] = \frac{a + bK}{K + 1},
    \end{equation}
    and
    \begin{equation}
        \var\limits_{r_{k} \sim U(a, b)} \bigg[ \max\limits_{1 \leq k \leq K} r_{k} \bigg] = (b-a)^{2}\frac{K}{(K+2)(K+1)^{2}}.
    \end{equation}
\end{lemma}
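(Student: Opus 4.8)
The plan is to reduce everything to the standard uniform case through an affine change of variables, and then read off both moments from the distribution of the maximum order statistic. First I would write $r_{k} = a + (b-a)U_{k}$, where each $U_{k} \sim U(0,1)$ is i.i.d.; since $b>a$, the map $u \mapsto a + (b-a)u$ is increasing, so the maximum commutes with it and $\max_{1 \leq k \leq K} r_{k} = a + (b-a)\max_{1 \leq k \leq K} U_{k}$. This isolates the only nontrivial quantity, the maximum of $K$ standard uniforms, and lets me recover the two claims from the elementary transformation rules $\mean[a + cX] = a + c\,\mean[X]$ and $\var[a + cX] = c^{2}\,\var[X]$.

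Next I would compute the law of $M_{K} := \max_{1 \leq k \leq K} U_{k}$. By independence its CDF on $[0,1]$ is $\Pr[M_{K} \leq x] = \prod_{k=1}^{K}\Pr[U_{k} \leq x] = x^{K}$, so differentiating yields the density $f_{M_{K}}(x) = K x^{K-1}$ on $[0,1]$. The two needed moments then collapse to elementary polynomial integrals, namely $\mean[M_{K}] = \int_{0}^{1} x \cdot K x^{K-1}\, dx = \frac{K}{K+1}$ and $\mean[M_{K}^{2}] = \int_{0}^{1} x^{2} \cdot K x^{K-1}\, dx = \frac{K}{K+2}$.

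With these in hand the expectation claim is immediate: $\mean[\max_{k} r_{k}] = a + (b-a)\frac{K}{K+1} = \frac{a(K+1) + (b-a)K}{K+1} = \frac{a + bK}{K+1}$. For the variance I would form $\var[M_{K}] = \mean[M_{K}^{2}] - (\mean[M_{K}])^{2} = \frac{K}{K+2} - \frac{K^{2}}{(K+1)^{2}}$, place this over the common denominator $(K+2)(K+1)^{2}$, and simplify the numerator via $K(K+1)^{2} - K^{2}(K+2) = K\big[(K+1)^{2} - K(K+2)\big] = K$, giving $\var[M_{K}] = \frac{K}{(K+2)(K+1)^{2}}$. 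Multiplying by $(b-a)^{2}$ then produces the stated formula.

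The only real obstacle is the algebraic cancellation in the variance step: the two fractions appear unrelated until the common denominator is taken, and the bracketed numerator must collapse to exactly $K$ for the clean closed form to emerge. Everything else — the affine reduction, the CDF-to-density differentiation, and the two moment integrals — is routine and carries no genuine difficulty.
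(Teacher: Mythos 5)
Your proof is correct, and it shares the paper's central idea---that independence gives the maximum the CDF $\Pr[\max_k r_k \leq x] = \prod_k \Pr[r_k \leq x]$, a pure power---but your execution differs in a way worth noting. The paper works directly with $U(a,b)$, writes $F(x) = (x-a)^K/(b-a)^K$, and computes both $\mean[\max_k r_k]$ and $\mean[(\max_k r_k)^2]$ by integration by parts, expressing each moment through integrals of $F$ itself (e.g.\ $\int_a^b x f(x)\,dx = [xF(x)]_a^b - \int_a^b F(x)\,dx$); this requires tracking the boundary terms and a double antidifferentiation for the second moment. You instead standardize first via $r_k = a + (b-a)U_k$, exploiting that an increasing affine map commutes with the maximum, so all work reduces to $M_K = \max_k U_k$ with density $Kx^{K-1}$ on $[0,1]$; the two moments then become one-line monomial integrals, and the general case is recovered from $\mean[a+cX] = a + c\mean[X]$ and $\var[a+cX] = c^2\var[X]$. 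Your route buys cleaner bookkeeping---no integration by parts, no boundary terms, and the factor $(b-a)^2$ in the variance emerges structurally rather than from algebraic cancellation---while the paper's direct computation avoids introducing the auxiliary variables. Both are complete and rigorous; your final variance simplification $K(K+1)^2 - K^2(K+2) = K$ matches the cancellation the paper obtains implicitly in its last display.
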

\begin{proof}
Let $r_1,\cdots,r_K$ be i.i.d random variables following the uniform distribution $U(a, b)$ satisfying $a>0$. We define the cumulative distribution function (c.d.f) of the random variable $\max\limits_{1 \leq k \leq K} r_{k}$ as follows:
\begin{equation}
\label{appenlem:1}
    \begin{split}
        F(x)
        &= \text{Pr}\bigg[ \max\limits_{1 \leq k \leq K} r_{k} \leq x \bigg].
    \end{split}
\end{equation}
Since $r_1,\cdots,r_K$ are i.i.d random variables, we can derive as follows:
\begin{equation}
\label{appenlem:2}
    \begin{split}
        F(x)
        &= \text{Pr}\big[ r_{k} \leq x \big]^{K} \\
        &= \frac{(x-a)^K}{(b-a)^K}.
    \end{split}
\end{equation}
Then, we can derive the expectation of $\max\limits_{1 \leq k \leq K} r_{k}$ as follows:
\begin{equation}
\label{appenlem:3}
    \begin{split}
        \mean\limits_{r_{k} \sim U(a, b)} \bigg[ \max\limits_{1 \leq k \leq K} r_{k} \bigg]
        &= \int_{a}^{b}xf(x)dx \\
        &= \Big[xF(x)\Big]_{a}^{b} - \int_{a}^{b}F(x)dx \\
        &= b - \int_{a}^{b} \frac{(x-a)^K}{(b-a)^K} dx \\
        &= b - \Big[ \frac{(x-a)^{K+1}}{(K+1) (b-a)^{K}} \Big]^{b}_{a} \\
        &= b - \frac{b-a}{K+1} \\
        &= \frac{Kb + a}{K+1},
    \end{split}
\end{equation}
where $f(x) = \frac{\partial}{\partial x}F(x)$.
To derive the variance of $\max\limits_{1 \leq k \leq K} r_{k}$, we derive the second moment of $\max\limits_{1 \leq k \leq K} r_{k}$ as follows:
\begin{equation}
\label{appenlem:4}
    \begin{split}
        \mean\limits_{r_{k} \sim U(a, b)} \bigg[ \big(\max\limits_{1 \leq k \leq K} r_{k}\big)^{2} \bigg]
        &= \int_{a}^{b}x^{2} f(x)dx \\
        &= \Big[x^{2}F(x)\Big]_{a}^{b} - \int_{a}^{b}2xF(x)dx \\
        &= b^{2} - \Big[2x\int F(x)dx\Big]_{a}^{b} + \int_{a}^{b} 2 \int F(x)dxdx \\
        &= b^{2} - \Big[2x \frac{(x-a)^{K+1}}{(K+1) (b-a)^{K}} \Big]_{a}^{b} + \int_{a}^{b} 2\frac{(x-a)^{K+1}}{(K+1) (b-a)^{K}} dx \\
        &= b^{2} - \frac{2b(b-a)}{K+1} + \Big[2 \frac{(x-a)^{K+2}}{(K+2)(K+1)(b-a)^{K}} \Big]_{a}^{b} \\
        &= b^{2} - \frac{2b(b-a)}{K+1} + \frac{2(b-a)^{2}}{(K+2)(K+1)},
    \end{split}
\end{equation}
where $f(x) = \frac{\partial}{\partial x}F(x)$.
By Eq.~(\ref{appenlem:3}) and Eq.~(\ref{appenlem:4}), we can derive the variance of $\max\limits_{1 \leq k \leq K} r_{k}$ as follows:
\begin{equation}
    \begin{split}
        \var\limits_{r_{k} \sim U(a, b)} \bigg[ \max\limits_{1 \leq k \leq K} r_{k} \bigg]
        &= \mean\limits_{r_{k} \sim U(a, b)} \bigg[ \big(\max\limits_{1 \leq k \leq K} r_{k}\big)^{2} \bigg] - \mean\limits_{r_{k} \sim U(a, b)} \bigg[ \max\limits_{1 \leq k \leq K} r_{k} \bigg]^2 \\
        &= b^{2} - \frac{2b(b-a)}{K+1} + \frac{2(b-a)^{2}}{(K+2)(K+1)} - \bigg(\frac{Kb + a}{K+1}\bigg)^2 \\
        &= (b-a)^{2}\frac{K}{(K+2)(K+1)^{2}}.
    \end{split}
\end{equation}
\end{proof}

\subsubsection{Theorem assuming a uniform distribution}
\label{sec:uniform}
Uniform distribution is one of the most commonly encountered distributions for a bounded random variable. 
With the simple assumption of $\mathcal{R}_{e}(f_{})$ following a uniform distribution, we can derive the following theorem that indicates that the ideal measure relates to the worst error and the gap.

\begin{theorem} 
\label{theorem:main_cheby}
    Suppose that $\mathcal{R}_{e}(f_{})$ 
    is i.i.d. drawn from a uniform distribution $U(a_{\theta}, b_{\theta})$, 
    where $\theta$ are the parameters of $f$.
    Furthermore, suppose that $N_{\mathcal{E}_{\text{all}}} \gg N$, where $N_{\mathcal{E}_{\text{all}}}$ 
    and $N$ are the cardinalities of $\mathcal{E}_{\text{all}}$ and $\mathcal{E}_{\text{given}}$, respectively.
    When $\mathcal{E}_{\text{all}}$ and $\delta \in (0, 1)$ are fixed, with a probability at least $1-\delta^{2}$, 
    \begin{equation}
    \begin{split}
        \max\limits_{e \in \mathcal{E}_{\text{\emph{all}}}} \mathcal{R}_{e}(f_{})
        = \frac{N+1}{N} \max\limits_{e_{n} \in \mathcal{E}_{\text{\emph{given}}}} \mathcal{R}_{e_{n}}(f_{}) + O(\frac{b_{\theta} - a_{\theta}}{N\delta}).
    \end{split}
    \end{equation}
\end{theorem}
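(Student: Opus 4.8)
The plan is to control the two maxima separately and then stitch them together. First I would invoke Lemma~\ref{lemma:expectation_variance_of_maximum_sup} twice, with $K=N_{\mathcal{E}_{\text{all}}}$ and with $K=N$, to obtain the exact mean and variance of $\max_{e\in\mathcal{E}_{\text{all}}}\mathcal{R}_e(f)$ and of $M_{\text{given}}:=\max_{e_n\in\mathcal{E}_{\text{given}}}\mathcal{R}_{e_n}(f)$. For the ``all'' maximum the mean is $\frac{a_\theta+b_\theta N_{\mathcal{E}_{\text{all}}}}{N_{\mathcal{E}_{\text{all}}}+1}=b_\theta-\frac{b_\theta-a_\theta}{N_{\mathcal{E}_{\text{all}}}+1}$ and the variance scales like $(b_\theta-a_\theta)^2/N_{\mathcal{E}_{\text{all}}}^2$; because $N_{\mathcal{E}_{\text{all}}}\gg N$, both the bias from $b_\theta$ and the fluctuation are negligible relative to the $1/N$ scale that governs the given maximum. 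Hence, with $\mathcal{E}_{\text{all}}$ fixed, I would treat $\max_{e\in\mathcal{E}_{\text{all}}}\mathcal{R}_e(f)=b_\theta+o(\frac{b_\theta-a_\theta}{N})$, absorbing the remainder into the final $O(\cdot)$ term.

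Second, for the given maximum I would apply Chebyshev's inequality (Theorem~\ref{thm:che_inequality}) to $M_{\text{given}}$, whose mean is $\mu_N:=\frac{a_\theta+b_\theta N}{N+1}$ and whose standard deviation is $\sigma_N:=(b_\theta-a_\theta)\sqrt{N/((N+2)(N+1)^2)}$ by the same lemma. Choosing $k=1/\delta$ gives $1/k^2=\delta^2$, so with probability at least $1-\delta^2$ we have $|M_{\text{given}}-\mu_N|<\sigma_N/\delta$.

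Third, I would connect the target quantity $\frac{N+1}{N}M_{\text{given}}$ to $b_\theta$. Since $\frac{N+1}{N}\mu_N=b_\theta+\frac{a_\theta}{N}$, the previous bound yields $\frac{N+1}{N}M_{\text{given}}=b_\theta+\frac{a_\theta}{N}+E$ with $|E|\le\frac{N+1}{N}\sigma_N/\delta$. A short simplification shows $\frac{N+1}{N}\sigma_N=(b_\theta-a_\theta)/\sqrt{N(N+2)}$, so $|E|=O(\frac{b_\theta-a_\theta}{N\delta})$; and, treating $a_\theta,b_\theta$ as fixed constants, the deterministic bias $\frac{a_\theta}{N}$ is likewise $O(\frac{b_\theta-a_\theta}{N\delta})$ because $\delta<1$. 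Rearranging and substituting $b_\theta=\max_{e\in\mathcal{E}_{\text{all}}}\mathcal{R}_e(f)+o(\frac{b_\theta-a_\theta}{N})$ from the first step gives $\max_{e\in\mathcal{E}_{\text{all}}}\mathcal{R}_e(f)=\frac{N+1}{N}M_{\text{given}}+O(\frac{b_\theta-a_\theta}{N\delta})$ on the same probability-$(1-\delta^2)$ event, which is the claim.

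The main obstacle I anticipate is the joint handling of the two maxima: $M_{\text{given}}$ is literally a sub-maximum of the ``all'' maximum, so the two are dependent and one cannot naively intersect two separate high-probability events. The cleanest route is the one above, namely collapsing $\max_{e\in\mathcal{E}_{\text{all}}}\mathcal{R}_e(f)$ to the deterministic constant $b_\theta$ using $N_{\mathcal{E}_{\text{all}}}\gg N$ (which is exactly what ``$\mathcal{E}_{\text{all}}$ fixed'' buys us), so that only the single random object $M_{\text{given}}$ carries the probabilistic statement and Chebyshev applies directly. A secondary point to watch is the bookkeeping in the $O(\cdot)$: one must confirm that both the $\frac{a_\theta}{N}$ bias and the $\frac{N+1}{N}\sigma_N/\delta$ fluctuation collapse to the stated $O(\frac{b_\theta-a_\theta}{N\delta})$ rate, which relies on treating $a_\theta,b_\theta$ as constants and on $\delta\in(0,1)$.
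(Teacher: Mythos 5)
Your proposal is correct and follows essentially the same route as the paper's proof: Chebyshev's inequality applied to $\max_{e_n\in\mathcal{E}_{\text{given}}}\mathcal{R}_{e_n}(f)$ with $k=1/\delta$, the lemma supplying $\mu_N=\frac{a_\theta+b_\theta N}{N+1}$ and $\sigma_N$, rescaling by $\frac{N+1}{N}$, absorbing the $\frac{a_\theta}{N}$ bias into the $O(\frac{b_\theta-a_\theta}{N\delta})$ term, and identifying $b_\theta$ with $\max_{e\in\mathcal{E}_{\text{all}}}\mathcal{R}_e(f)$ via $N_{\mathcal{E}_{\text{all}}}\gg N$. Your only deviation is cosmetic: you invoke the lemma a second time with $K=N_{\mathcal{E}_{\text{all}}}$ to justify collapsing the all-environment maximum to $b_\theta$ up to a negligible remainder, where the paper simply asserts this equality outright.
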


\begin{proof}

By Chebyshev's inequality (see Theorem~\ref{thm:che_inequality}), we can consider the following inequality:
\begin{equation}
\label{ieq:thm3.2.1}
\begin{split}
    \text{Pr}\Bigg[ \given[\Big] \max\limits_{e_{i} \in \mathcal{E}_{\text{given}}} \mathcal{R}_{e_{i}}(f_{}) - \mu \given[\Big] \geq t \sigma \Bigg] \leq \frac{1}{t^{2}},
\end{split}
\end{equation}
where $\mu = \mean\big[ \max\limits_{e_{i} \in \mathcal{E}_{\text{given}}} \mathcal{R}_{e_{i}}(f_{}) \big]$, $\sigma = \sqrt{\var\big[ \max\limits_{e_{i} \in \mathcal{E}_{\text{given}}} \mathcal{R}_{e_{i}}(f_{}) \big]}$.
By the complement rule for probability, we can derive the complement of 
Eq.~(\ref{ieq:thm3.2.1}) as follows:
\begin{equation}
\label{ieq:thm3.2.2}
    \text{Pr}\Bigg[ \given[\Big] \max\limits_{e_{i} \in \mathcal{E}_{\text{given}}} \mathcal{R}_{e_{i}}(f_{}) - \mu \given[\Big] \leq t \sigma \Bigg] \geq 1 - \frac{1}{t^{2}}.
\end{equation}
From Lemma~\ref{lemma:expectation_variance_of_maximum_sup},
\begin{equation}
\label{ieq:thm3.2.3}
    \mu = \frac{a_{\theta} + b_{\theta}N}{N + 1},
\end{equation}
and
\begin{equation}
\label{ieq:thm3.2.4}
    \sigma = \frac{b_{\theta}-a_{\theta}}{N+1}\sqrt{\frac{N}{N+2}}.
\end{equation}
Since $\frac{1}{N+1}\sqrt{\frac{N}{N+2}} \leq \frac{1}{N}$, 
\begin{equation}
\label{ieq:thm3.2.5}
    \sigma \leq \frac{b_{\theta}-a_{\theta}}{N}.
\end{equation}
From Eq.~(\ref{ieq:thm3.2.2}), we can derive the following equation by using Eq.~(\ref{ieq:thm3.2.3}) and Eq.~(\ref{ieq:thm3.2.5}):
\begin{equation}
\label{ieq:thm3.2.5_2}
    \begin{split}
        \text{Pr}\Bigg[ \given[\Big] \max\limits_{e_{i} \in \mathcal{E}_{\text{given}}} \mathcal{R}_{e_{i}}(f_{}) - \frac{a_{\theta} + b_{\theta}N}{N + 1} \given[\Big] \leq t \frac{b_{\theta}-a_{\theta}}{N} \Bigg] \geq 1 - \frac{1}{t^{2}}.
    \end{split}
\end{equation}
By choosing $\frac{1}{t}$ as $\delta$, we express Eq.~(\ref{ieq:thm3.2.5_2}) as follows:
\begin{equation}
\label{ieq:thm3.2.6}
\begin{split}
    \text{Pr}\Bigg[ \given[\Big] \max\limits_{e_{i} \in \mathcal{E}_{\text{given}}} \mathcal{R}_{e_{i}}(f_{}) - \frac{a_{\theta} + b_{\theta}N}{N + 1} \given[\Big] \leq \frac{b_{\theta}-a_{\theta}}{N \delta} \Bigg] \geq 1 - \delta^{2}.
\end{split}
\end{equation}
From Eq.~(\ref{ieq:thm3.2.6}), we have determined that with probability at least $1-\delta^{2}$,
\begin{equation}
\label{ieq:thm3.2.7}
    \begin{split}
        \given[\Big] \frac{a_{\theta} + b_{\theta}N}{N + 1} - \max\limits_{e_{i} \in \mathcal{E}_{\text{given}}} \mathcal{R}_{e_{i}}(f_{}) \given[\Big]\leq \frac{b_{\theta}-a_{\theta}}{N \delta}.
    \end{split}
\end{equation}
By multiplying $\frac{N+1}{N}$ in both sides of Eq.~(\ref{ieq:thm3.2.7}), we can derive the the following inequality:
\begin{equation}
\label{ieq:thm3.2.8}
    \given[\Big] \frac{a_{\theta}}{N} + b_{\theta} - \frac{N+1}{N} \max\limits_{e_{i} \in \mathcal{E}_{\text{given}}} \mathcal{R}_{e_{i}}(f_{}) \given[\Big]\leq (b_{\theta}-a_{\theta})\frac{(N+1)}{N^{2} \delta}.
\end{equation}
Since $\given[\Big] \frac{a_{\theta}}{N}+ b_{\theta} - \frac{N+1}{N} \max\limits_{e_{i} \in \mathcal{E}_{\text{given}}} \mathcal{R}_{e_{i}}(f_{}) \given[\Big]$ is asymptotically bounded above by $\frac{1}{N\delta}$, we obtain the following equation: 
\begin{equation}
\label{ieq:thm3.2.9}
    \frac{a_{\theta}}{N} + b_{\theta} - \frac{N+1}{N} \max\limits_{e_{i} \in \mathcal{E}_{\text{given}}} \mathcal{R}_{e_{i}}(f_{}) = O(\frac{b_{\theta}-a_{\theta}}{N\delta}).
\end{equation}
Then,
\begin{equation}
\label{eq:thm3.2.10}
\begin{split}
    b_{\theta} = \frac{N+1}{N} \max\limits_{e_{i} \in \mathcal{E}_{\text{given}}} \mathcal{R}_{e_{i}}(f_{}) + O(\frac{b_{\theta}-a_{\theta}}{N\delta}) - \frac{a_{\theta}}{N}.
\end{split}
\end{equation}
Because it is true that $\frac{a_{\theta}}{N}=O(\frac{b_{\theta}-a_{\theta}}{N})$, we can derive Eq.~(\ref{eq:thm3.2.10}) as follows:
\begin{equation}
\label{eq:thm3.2.11}
    b_{\theta} = \frac{N+1}{N} \max\limits_{e_{i} \in \mathcal{E}_{\text{given}}} \mathcal{R}_{e_{i}}(f_{}) + O(\frac{b_{\theta}-a_{\theta}}{N\delta}).
\end{equation}

Because we assume that $N_{\mathcal{E}_{\text{all}}} \gg N$ and $N_{\mathcal{E}_{\text{all}}}$ is large enough, $\max\limits_{e \in \mathcal{E}_{\text{all}}} \mathcal{R}_{e}(f_{}) = b_{\theta}$.
Thus, we can derive the desired result below with probability at least $1-\delta^{2}$,
\begin{equation}
\begin{split}
    \max\limits_{e \in \mathcal{E}_{\text{all}}} \mathcal{R}_{e}(f_{}) 
    = \frac{N+1}{N} \max\limits_{e_{i} \in \mathcal{E}_{\text{given}}} \mathcal{R}_{e_{i}}(f_{}) + O(\frac{b_{\theta}-a_{\theta}}{N\delta}).
\end{split}
\end{equation}

\end{proof}

According to Theorem~\ref{theorem:main_cheby}, the worst error term is directly derived from the ideal measure.
As for the gap term, it is not directly derived but the big $O$ term contains $b_{\theta} - a_{\theta}$ that can be associated with 
the empirical gap $\max\limits_{e_{n} \in \mathcal{E}_{\text{given}}} \mathcal{R}_{e_{n}}(f_{}) - \min\limits_{e_{n} \in \mathcal{E}_{\text{given}}} \mathcal{R}_{e_{n}}(f_{}) $.

\subsubsection{Theorem assuming a decreasing range with $N$}
\label{sec:range}
It is intuitive to assume that the maximum and minimum of $\{\mathcal{R}_{e_{n}}(f)\}_{n=1}^{N}$ will converge to the supremum and infimum of $\mathcal{R}_{e}(f)$ as $N$ increases.
To develop a theorem related to this intuition, we consider an assumption where the range of the worst error and the range of the best error decrease as $O(\frac{1}{N})$. With the simple assumption, we can derive the following theorem that also indicates that the ideal measure relates to the worst error and the gap.

\begin{theorem}
\label{theorem:main_mcdiar}
    Suppose that the range of $\max\limits_{e_{n} \in \mathcal{E}_{\text{given}}}\mathcal{R}_{e_{n}}(f_{})$ is $[b_{\theta} - \frac{b_{\theta}-a_{\theta}}{N},b_{\theta}]$ and the range of $\min\limits_{e_{n} \in \mathcal{E}_{\text{given}}}\mathcal{R}_{e_{n}}(f_{})$ is $[a_{\theta},a_{\theta} + \frac{b_{\theta}-a_{\theta}}{N}]$, where the range of $\mathcal{R}_{e}(f_{})$ is $[a_{\theta}, b_{\theta}]$, $\theta$ are the parameters of $f_{}$, and $N$ is the cardinality of $\mathcal{E}_{\text{given}}$.
    Furthermore, suppose that $N_{\mathcal{E}_{\text{all}}} \gg N$, where $N_{\mathcal{E}_{\text{all}}}$ 
    and $N$ are the cardinalities of $\mathcal{E}_{\text{all}}$ and $\mathcal{E}_{\text{given}}$, respectively.
    When $\mathcal{E}_{\text{all}}$ is fixed,
    \begin{equation}
    \begin{split}
        &\max\limits_{e \in \mathcal{E}_{\text{\emph{all}}}} \mathcal{R}_{e}(f_{})
        \leq 
        \max\limits_{e_{n} \in \mathcal{E}_{\text{\emph{given}}}} \mathcal{R}_{e_{n}}(f_{}) \\
        &+
        \frac{1}{N-2} \left( \max\limits_{e_{n} \in \mathcal{E}_{\text{given}}} \mathcal{R}_{e_{n}}(f_{}) - \min\limits_{e_{n} \in \mathcal{E}_{\text{given}}} \mathcal{R}_{e_{n}}(f_{}) \right).
    \end{split}
    \end{equation}
\end{theorem}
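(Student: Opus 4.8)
The plan is to treat this as a purely algebraic consequence of the two range assumptions, after first pinning down the left-hand side. Since $N_{\mathcal{E}_{\text{all}}} \gg N$ and $\mathcal{E}_{\text{all}}$ is large enough, I would argue that the empirical maximum over $\mathcal{E}_{\text{all}}$ saturates the supremum of the range, so that $\max_{e \in \mathcal{E}_{\text{all}}} \mathcal{R}_{e}(f) = b_{\theta}$. This is the same reasoning used at the end of Theorem~\ref{theorem:main_cheby}, and it reduces the target inequality to a bound on $b_{\theta}$ expressed in terms of the given worst and best errors.

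Next I would introduce the shorthand $M = \max_{e_{n} \in \mathcal{E}_{\text{given}}} \mathcal{R}_{e_{n}}(f)$ and $m = \min_{e_{n} \in \mathcal{E}_{\text{given}}} \mathcal{R}_{e_{n}}(f)$, and extract two one-sided bounds directly from the range hypotheses. The assumption that $M$ lies in $[b_{\theta} - \frac{b_{\theta}-a_{\theta}}{N}, b_{\theta}]$ yields $b_{\theta} - M \leq \frac{b_{\theta}-a_{\theta}}{N}$, and the assumption that $m$ lies in $[a_{\theta}, a_{\theta} + \frac{b_{\theta}-a_{\theta}}{N}]$ yields $m - a_{\theta} \leq \frac{b_{\theta}-a_{\theta}}{N}$. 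These two inequalities are the only information I will need from the hypotheses.

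The crux of the argument is to eliminate the unknown width $b_{\theta} - a_{\theta}$. Adding the two bounds gives $(b_{\theta} - a_{\theta}) - (M - m) \leq \frac{2(b_{\theta}-a_{\theta})}{N}$, which rearranges to $(b_{\theta} - a_{\theta})(1 - \frac{2}{N}) \leq M - m$, i.e. $b_{\theta} - a_{\theta} \leq \frac{N}{N-2}(M - m)$. This step is where the factor $\frac{1}{N-2}$ is born, and it is also where the condition $N > 2$ is implicitly required so that the coefficient $1 - \frac{2}{N}$ is positive and the division is order-preserving.

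Finally I would substitute this width bound back into the first one-sided inequality: from $b_{\theta} \leq M + \frac{b_{\theta}-a_{\theta}}{N}$ together with $b_{\theta} - a_{\theta} \leq \frac{N}{N-2}(M-m)$, I obtain $b_{\theta} \leq M + \frac{1}{N}\cdot\frac{N}{N-2}(M-m) = M + \frac{1}{N-2}(M-m)$, which is exactly the claimed inequality once $b_{\theta}$ is identified with $\max_{e \in \mathcal{E}_{\text{all}}} \mathcal{R}_{e}(f)$. I do not anticipate a genuine obstacle, since the manipulation is elementary; the one point to handle with care is the justification that the empirical maximum over $\mathcal{E}_{\text{all}}$ equals $b_{\theta}$, as this is an asymptotic ``large enough'' claim rather than an exact identity, and every subsequent line hinges on it.
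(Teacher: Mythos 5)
Your proof is correct and follows essentially the same route as the paper: identify $\max_{e \in \mathcal{E}_{\text{all}}} \mathcal{R}_{e}(f) = b_{\theta}$ from $N_{\mathcal{E}_{\text{all}}} \gg N$, use the range of the empirical worst error to get $b_{\theta} \leq M + \frac{b_{\theta}-a_{\theta}}{N}$, derive $b_{\theta}-a_{\theta} \leq \frac{N}{N-2}(M-m)$ from the two range hypotheses, and substitute. Your explicit note that $N>2$ is implicitly required is a point the paper leaves unstated but is indeed needed for the final rearrangement.
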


\begin{proof}
Since we assume that $N_{\mathcal{E}_{\text{all}}}$ is sufficiently larger than $N$, the following is satisfied by the range of $\mathcal{R}_e(f)$.
\begin{equation}
\label{eqn:main_mcdiar1}
    \begin{split}
        & \max\limits_{e \in \mathcal{E}_{\text{all}}} \mathcal{R}_{e}(f_{})=b_{\theta}, \\
        & \min\limits_{e \in \mathcal{E}_{\text{all}}} \mathcal{R}_{e}(f_{})=a_{\theta}.
    \end{split}
\end{equation}
By the assumption that the range of $\max\limits_{e_{n} \in \mathcal{E}_{\text{given}}}\mathcal{R}_{e_{n}}(f_{})$ is $[b_{\theta} - \frac{b_{\theta}-a_{\theta}}{N},b_{\theta}]$, we have the following inequality.
\begin{equation}
\label{eqn:main_mcdiar2}
    \max\limits_{e_{n} \in \mathcal{E}_{\text{given}}}\mathcal{R}_{e_{n}}(f_{}) \geq b_{\theta} - \frac{b_{\theta}-a_{\theta}}{N}.
\end{equation}
By Eq.~(\ref{eqn:main_mcdiar1}) and (\ref{eqn:main_mcdiar2}), we have
\begin{equation}
\label{eqn:main_mcdiar3}
    \max\limits_{e \in \mathcal{E}_{\text{all}}} \mathcal{R}_{e}(f_{})
    \leq
    \max\limits_{e_{n} \in \mathcal{E}_{\text{given}}} \mathcal{R}_{e_{n}}(f_{})
    +
    \frac{1}{N} \bigg(
    \max\limits_{e \in \mathcal{E}_{\text{all}}} \mathcal{R}_{e}(f_{}) - \min\limits_{e \in \mathcal{E}_{\text{all}}} \mathcal{R}_{e}(f_{})
    \bigg).
\end{equation}
Since $\max\limits_{e_{n} \in \mathcal{E}_{\text{given}}}\mathcal{R}_{e_{n}}(f_{}) \in [b_{\theta} - \frac{b_{\theta}-a_{\theta}}{N},b_{\theta}]$ and $\min\limits_{e_{n} \in \mathcal{E}_{\text{given}}}\mathcal{R}_{e_{n}}(f_{}) \in [a_{\theta},a_{\theta} + \frac{b_{\theta}-a_{\theta}}{N}]$, we have the range of $\max\limits_{e_{n} \in \mathcal{E}_{\text{given}}}\mathcal{R}_{e_{n}}(f_{}) - \min\limits_{e_{n} \in \mathcal{E}_{\text{given}}}\mathcal{R}_{e_{n}}(f_{})$ as follows:
\begin{equation}
\label{eqn:main_mcdiar4}
    \Big(\max\limits_{e_{n} \in \mathcal{E}_{\text{given}}}\mathcal{R}_{e_{n}}(f_{}) - \min\limits_{e_{n} \in \mathcal{E}_{\text{given}}}\mathcal{R}_{e_{n}}(f_{})\Big) \in \Big[\frac{N-2}{N}(b_{\theta}-a_{\theta}), b_{\theta}-a_{\theta}\Big].
\end{equation}
By Eq.~(\ref{eqn:main_mcdiar1}) and (\ref{eqn:main_mcdiar4}), we can derive the following inequality:
\begin{equation}
\label{eqn:main_mcdiar5}
    \frac{N-2}{N}\Big(\max\limits_{e \in \mathcal{E}_{\text{all}}} \mathcal{R}_{e}(f_{})-\min\limits_{e \in \mathcal{E}_{\text{all}}} \mathcal{R}_{e}(f_{})\Big) \leq \max\limits_{e_{n} \in \mathcal{E}_{\text{given}}}\mathcal{R}_{e_{n}}(f_{}) - \min\limits_{e_{n} \in \mathcal{E}_{\text{given}}}\mathcal{R}_{e_{n}}(f_{}).
\end{equation}
Then, we have
\begin{equation}
\label{eqn:main_mcdiar6}
    \max\limits_{e \in \mathcal{E}_{\text{all}}} \mathcal{R}_{e}(f_{}) - \min\limits_{e \in \mathcal{E}_{\text{all}}} \mathcal{R}_{e}(f_{}) \leq \frac{N}{N-2} \bigg(\max\limits_{e_{n} \in \mathcal{E}_{\text{given}}} \mathcal{R}_{e_{n}}(f_{}) - \min\limits_{e_{n} \in \mathcal{E}_{\text{given}}} \mathcal{R}_{e_{n}}(f_{})\bigg).
\end{equation}
Therefore, we can derive the following inequality by Eq.~(\ref{eqn:main_mcdiar3}) and (\ref{eqn:main_mcdiar6}).
\begin{equation}
    \begin{split}
        \max\limits_{e \in \mathcal{E}_{\text{all}}} \mathcal{R}_{e}(f_{}) 
        & \leq \max\limits_{e_{n} \in \mathcal{E}_{\text{given}}} \mathcal{R}_{e_{n}}(f_{}) + \frac{1}{N} \bigg(\max\limits_{e \in \mathcal{E}_{\text{all}}} \mathcal{R}_{e}(f_{}) - \min\limits_{e \in \mathcal{E}_{\text{all}}} \mathcal{R}_{e}(f_{})\bigg)\\
        & \leq \max\limits_{e_{n} \in \mathcal{E}_{\text{given}}} \mathcal{R}_{e_{n}}(f_{}) + \Big(\frac{1}{N}\Big)\Big(\frac{N}{N-2}\Big)\bigg(\max\limits_{e_{n} \in \mathcal{E}_{\text{given}}} \mathcal{R}_{e_{n}}(f_{}) - \min\limits_{e_{n} \in \mathcal{E}_{\text{given}}} \mathcal{R}_{e_{n}}(f_{})\bigg)\\
        & = \max\limits_{e_{n} \in \mathcal{E}_{\text{given}}} \mathcal{R}_{e_{n}}(f_{}) + \frac{1}{N-2}\bigg(\max\limits_{e_{n} \in \mathcal{E}_{\text{given}}} \mathcal{R}_{e_{n}}(f_{}) - \min\limits_{e_{n} \in \mathcal{E}_{\text{given}}} \mathcal{R}_{e_{n}}(f_{})\bigg).
    \end{split}
\end{equation}
\end{proof}

Unlike Theorem~\ref{theorem:main_cheby}, Theorem~\ref{theorem:main_mcdiar} directly demonstrates that the ideal measure $\max\limits_{e \in \mathcal{E}_{\text{all}}} \mathcal{R}_{e}(f_{})$ is upper-bounded by the weighted summation of the worst error and the gap. They are accessible measures because they are evaluated with $\mathcal{E}_{\text{given}}$ only.

\subsubsection{Connecting back to the worst+gap measure }
\label{sec:connetion}

For the proofs of Theorem~\ref{theorem:main_cheby} and Theorem~\ref{theorem:main_mcdiar}, we have assumed a fixed model $f$. 
However, the worst+gap measure in Eq.~(\ref{eq:our_measure}) is defined with $f^{\mathcal{E}_{\text{given}}}_{A}$ in the left-hand side and $\{f_{A}^{\mathcal{E}_{\text{given}}\backslash e_{n}}\}_{n=1}^{N}$ in the right-hand side. Therefore, we need to recover Eq.~(\ref{eq:our_measure}) from the theorems. First, we start from Theorem~\ref{theorem:main_mcdiar} because its upper bound is expressed without any ambiguity such as a big $O$ term. 
Second, we note that the theorem applies for a fixed model $f$. Therefore, we can replace $f$ with 
$f_{A}^{\mathcal{E}_{\text{given}}\backslash e_{n}}$ to obtain the following upper bound.
\begin{equation}
\label{eq:bound_incomplete}
\begin{split}
    &\max\limits_{e \in \mathcal{E}_{\text{all}}} \mathcal{R}_{e}(f^{\mathcal{E}_{\text{given}} \backslash e_{n}}_{A})
    \leq \max\limits_{e_{n} \in \mathcal{E}_{\text{given}}} \mathcal{R}_{e_{n}}(f^{\mathcal{E}_{\text{given}} \backslash e_{n}}_{A}) \\
    &+ \frac{1}{N-2}
    \left(
    \max\limits_{e_{n} \in \mathcal{E}_{\text{given}}} \mathcal{R}_{e_{n}}(f_{A}^{\mathcal{E}_{\text{given}} \backslash e_{n}}) - \min\limits_{e_{n} \in \mathcal{E}_{\text{given}}} \mathcal{R}_{e_{n}}(f_{A}^{\mathcal{E}_{\text{given}} \backslash e_{n}})
    \right).
\end{split}
\end{equation}
Note that the right-hand side is exactly the same as in the Eq.~(\ref{eq:our_measure}), but the left-hand side is slightly different because of the term $f_{A}^{\mathcal{E}_{\text{given}}\backslash e_{n}}$.  
Finally, it is reasonable to assume that the performance of a DG model improves when an additional domain is added to $\mathcal{E}_{\text{given}}$. 
That is, we assume $\max\limits_{e \in \mathcal{E}_{\text{all}}} \mathcal{R}_{e}(f^{\mathcal{E}_{\text{given}}}_{A}) \leq \max\limits_{e \in \mathcal{E}_{\text{all}}} \mathcal{R}_{e}(f^{\mathcal{E}_{\text{given}} \backslash e_{n}}_{A})$. Then, the desired bound can be obtained by replacing left-hand-side of Eq.~(\ref{eq:bound_incomplete}) with $\max\limits_{e \in \mathcal{E}_{\text{all}}} \mathcal{R}_{e}(f^{\mathcal{E}_{\text{given}}}_{A})$, as follows:
\begin{equation}
\begin{split}
    &\max\limits_{e \in \mathcal{E}_{\text{all}}} \mathcal{R}_{e}(f^{\mathcal{E}_{\text{given}}}_{A}) 
    \leq \max\limits_{e_{n} \in \mathcal{E}_{\text{given}}} \mathcal{R}_{e_{n}}(f^{\mathcal{E}_{\text{given}} \backslash e_{n}}_{A}) \\
    &+ \frac{1}{N-2} \left(
    \max\limits_{e_{n} \in \mathcal{E}_{\text{given}}} \mathcal{R}_{e_{n}}(f_{A}^{\mathcal{E}_{\text{given}} \backslash e_{n}}) - \min\limits_{e_{n} \in \mathcal{E}_{\text{given}}} \mathcal{R}_{e_{n}}(f_{A}^{\mathcal{E}_{\text{given}} \backslash e_{n}})
    \right).
\end{split}
\end{equation}

We propose the worst+gap measure in Eq.~(\ref{eq:our_measure}) based on the above upper bound derivation of the ideal measure. While the need for an assumption and the adoption of an upper bound remains as limitations, the common appearance of worst and gap terms provides a compelling justification for the proposed worst+gap measure. The practical superiority of the proposed measure will be demonstrated through experiments in Section~\ref{sec:experiment}.

\section{Datasets for studying DG measures}
\label{sec:our_benchmark}

For the purpose of DG measure study, we create five new datasets by slightly modifying five existing datasets.

\subsection{SR-CMNIST dataset}
\label{sec:SRCMNIST}

\begin{figure}[t!]
\begin{center}
\includegraphics[width=0.34\columnwidth]{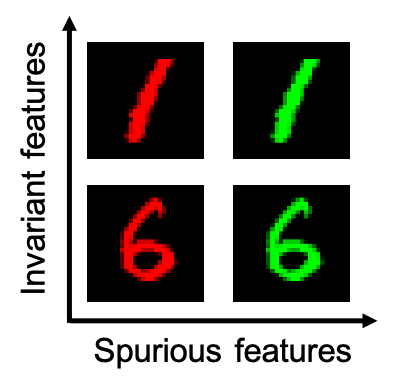}
\end{center}
\caption{
Invariant and spurious features of SR-CMNIST. In SR-CMNIST, color serves as the spurious feature.
\label{fig:dataset_crm}
}
\end{figure}

CMNIST~\cite{arjovsky2019invariant} consists of three datasets for a binary classification task, where the datasets are derived from the MNIST dataset~\cite{deng2012mnist}. Three modifications are made to MNIST to create each dataset of the three environments.
\begin{itemize}
    \item In the first step, a \textit{preliminary binary label} is assigned to each image based on its digit. Digits 0, 1, 2, 3, and 4 are mapped as the first class of CMNIST. Digits 5, 6, 7, 8, and 9 are mapped as the second class of CMNIST. In this way, CMNIST becomes a \textit{binary} classification task.
    \item 
    In the second step, the final class labels of CMNIST are obtained by randomly flipping the preliminary labels with probability 0.25. In this way, a model that only utilizes the desired digit information, i.e., the information associated with the preliminary binary labels in step one, ends up with an error rate of 25\%. 
    \item In the third step, the color labels are applied to the images. For each environment, the color labels are determined by flipping the final labels with probability $e$. Note that $e$ also serves as the notation of the environments in our work. Then, the digits in the images are colored red or green according to the color labels.
\end{itemize}
CMNIST consists of three given environments, $\mathcal{E}_{\text{given}}=\{0.9, 0.8, 0.1\}$, and the three datasets that correspond to the three environments are created by following the procedure described above with $e=0.9, 0.8,$ and $0.1$.

In practice, only $\mathcal{E}_{\text{given}}=\{0.9, 0.8, 0.1\}$ can be used for choosing a learning algorithm and training the associated model. However, the ideal measure $\mathcal{R}^{\text{IDEAL}}_{A}= \max\limits_{e \in \mathcal{E}_{\text{all}}}\mathcal{R}_{e}(f_{A}^{\mathcal{E}_{\text{given}}})$ is defined over all the environments $\mathcal{E}_{\text{all}}$. When $\mathcal{E}_{\text{all}}$ contains both  $e=0.0$ and  $e=1.0$, a model only utilizing the color information has $\mathcal{R}^{\text{IDEAL}}_{A}$ of 100\% because its performance is either 0\% for $e=0.0$ and 100\% for $e=1.0$ or vice versa. On the contrary, $\mathcal{R}^{\text{IDEAL}}_{A}$ is 25\% for a model only utilizing the digit information, as explained in the above second step. 

The study of DG evaluation measures faces a limitation when employing CMNIST as the dataset, primarily owing to the dataset's restricted number of environments.
To overcome this limitation, we increase the number of environments to 101 by choosing $\mathcal{E}_{\text{all}}=\{0.00, 0.01, \cdots, 1.00\}$ while maintaining invariant and spurious features as shown in Figure~\ref{fig:dataset_crm}.
This can be understood as a discrete approximation of the continuously distributed environment $e\sim U(0,1)$.

In addition to the expansion of $\mathcal{E}_{\text{all}}$, we need to decide which environments to include in $\mathcal{E}_{\text{given}}$. Because distributional discrepancy is an important aspect of DG~\cite{arjovsky2019invariant, aubin2021linear, nagarajan2020understanding, hashimoto2018fairness}, we begin by constructing $\mathcal{E}_{\text{given}}$ as the union of two disjoint groups: a majority set $\mathcal{E}_{\text{major}}$ and a minority set $\mathcal{E}_{\text{minor}}$. Note that $\mathcal{E}_{\text{major}}$ and $\mathcal{E}_{\text{minor}}$ represent environment groups divided by the strength range of spurious correlations, and they adhere to the condition $N_\text{major} \geq N_\text{minor}$, where $N_\text{major}$ and $N_\text{minor}$ are the cardinalities of $\mathcal{E}_{\text{major}}$ and $\mathcal{E}_{\text{minor}}$. Finally, we define the control factors \textit{Ratio} and \textit{Scale} as follows:
\begin{itemize}
    \item \textit{Ratio}: the ratio of $N_{\text{major}}$ and $N_{\text{minor}}$ (\ie, $N_{\text{major}}:N_{\text{minor}}$). 
    \item \textit{Scale}: the scale factor of $N$. 
\end{itemize}
For example, if \textit{Ratio} is $4:1$ and \textit{Scale} is $3$, $N_\text{major} = 12$ and $N_\text{minor} = 3$.
We can generate a diverse $\mathcal{E}_{\text{given}}$ by controlling these factors. In our experiments, we consider \textit{Ratio} of  3:1, 4:1, and 5:1, and \textit{Scale} of 1, 2, 3, and 4. By considering their combinations, we end up with a total of 12 different scenarios for $\mathcal{E}_{\text{given}}$. 

The detailed process of environment selection is as follows.
\begin{itemize}
    \item $\mathcal{E}_{\text{major}} = \{0.8, \cdots, 0.8 + 0.1\frac{n_{\text{major}}-1}{N_{\text{major}}-1}, \cdots, 0.9\}$, where $n_{\text{major}}$ denotes the index number. This formation allows a consistent increment within the range of $0.8$ to $0.9$, while forcing the cardinality to be $N_{\text{major}}$.
    \item $\mathcal{E}_{\text{minor}} = \{0.1, \cdots, 0.1 + 0.1\frac{n_{\text{minor}}-1}{N_{\text{minor}}-1}, \cdots, 0.2\}$, where $n_{\text{minor}}$ signifies the index number. This formation allows a consistent increment within the range of $0.1$ to $0.2$, while forcing the cardinality to be $N_{\text{major}}$.
\end{itemize}
Note that we define $\mathcal{E}_{\text{minor}}$ to be $\{0.1\}$ when $N_{\text{minor}}=1$.

\subsection{C-Cats\&Dogs and L-CIFAR10 datasets}
\label{sec:two_add_datasets}

\begin{figure}[t!]
\begin{center}
\subfigure[C-Cats\&Dogs]{
\includegraphics[width=0.34\columnwidth]{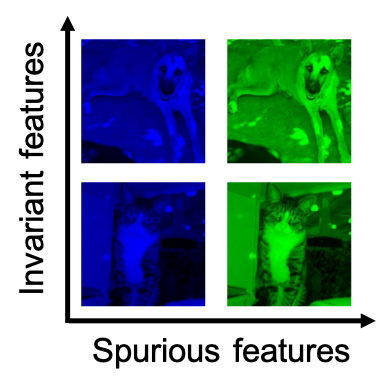}
}
\hspace{0.4cm}
\subfigure[L-CIFAR10]{
\includegraphics[width=0.34\columnwidth]{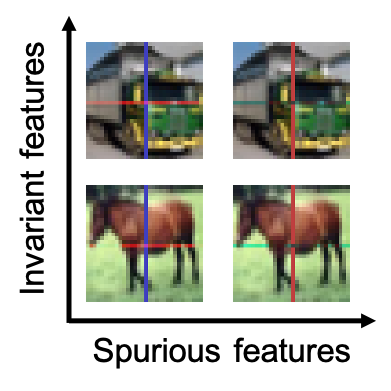}
}
\end{center}
\caption{
Invariant and spurious features of C-Cats\&Dogs and L-CIFAR10 datasets. In C-Cats\&Dogs datasets, color serves as the spurious feature. In L-CIFAR10 dataset, colored line serves as the spurious feature.
\label{fig:dataset}
}
\end{figure}

In addition to SR-CMNIST, we construct and investigate two additional datasets, C-Cats\&Dogs dataset and L-CIFAR10 dataset. 
C-Cats\&Dogs and L-CIFAR10 contain both spurious features that need to be discarded and invariant features that need to be learned as shown in Figure~\ref{fig:dataset}.
Unlike SR-CMNIST, both datasets contain realistic images. Unlike SR-CMNIST and C-Cats\&Dogs, L-CIFAR10 is a 10-class classification dataset. 
We control \textit{Ratio} in C-Cats\&Dogs and L-CIFAR10 for strict DG measure study.
The details of C-Cats\&Dogs and L-CIFAR10 datasets are elaborated in the subsequent sections.

\subsubsection{C-Cats\&Dogs dataset:}
\label{sec:appendix_ccats}
We start from the existing Cats\&Dogs-based dataset~\cite{nagarajan2020understanding, elson2007asirra}. 
While we adhere to the protocol described in the original work, we modify the environment configurations to make it appropriate for DG measure study as outlined below:
\begin{itemize}
    \item We introduce label noise with flipping rate of $25\%$. 
    \item We induce a spurious correlation between color and labels.
    \item We define $\mathcal{E}_{\text{all}}$ to contain 101 environments (datasets), parameterized by a spurious feature (between $0.00$ and $1.00$ in step size of $0.01$).  
    \item We define $\mathcal{E}_{\text{given}}$ to contain 5 environments (datasets), where the spurious features are chosen to be $0.05,0.10,0.15,0.20$, and $0.50$.
\end{itemize}

\subsubsection{L-CIFAR10 dataset}
\label{sec:appendix_lcifar}
We start from the existing CIFAR10-based dataset~\cite{nagarajan2020understanding}. Similar to the C-Cats\&Dogs dataset, we make a list of modifications as outlined below: 
\begin{itemize}
    \item We introduce label noise with flipping rate of $25\%$.
    \item We induce a spurious correlation between colored lines and labels.
    \item We define $\mathcal{E}_{\text{all}}$ to contain 101 environments (datasets), parameterized by a spurious feature (between $0.00$ and $1.00$ in step size of $0.01$).
    \item We define $\mathcal{E}_{\text{given}}$ to contain 5 environments (datasets), where the spurious features 
    are chosen to be $0.50,0.80,0.85,0.90$, and $0.95$.
\end{itemize}

\subsection{Real-world datasets}
\label{sec:our_benchmark_real-world}

\begin{figure}[t]
  \centering 
\resizebox{1.\columnwidth}{!}{
  \subfigure[PACS]{
    \includegraphics[height=0.3\columnwidth]{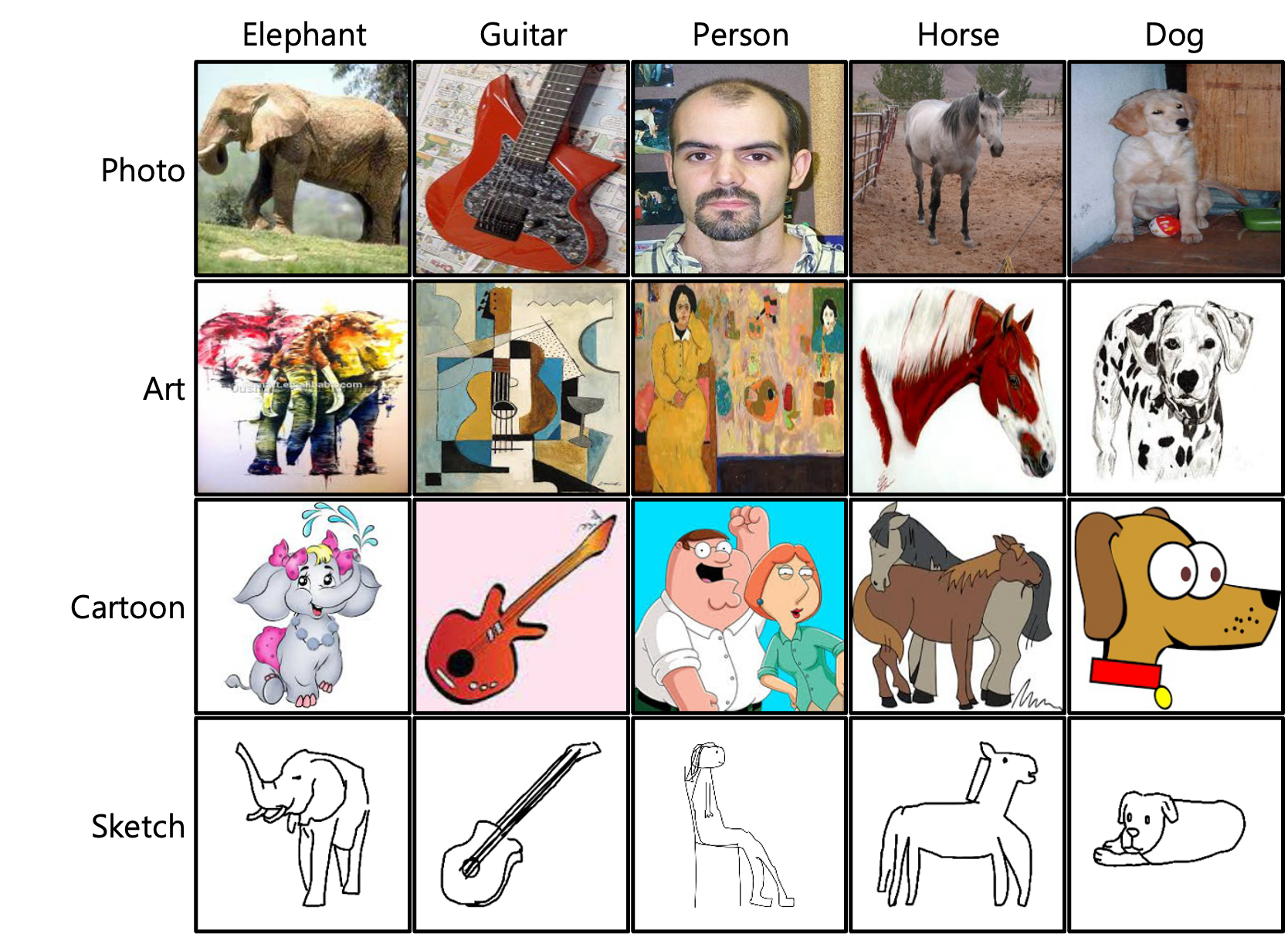}
  }
  \subfigure[Corruptions]{
    \includegraphics[height=0.3\columnwidth]{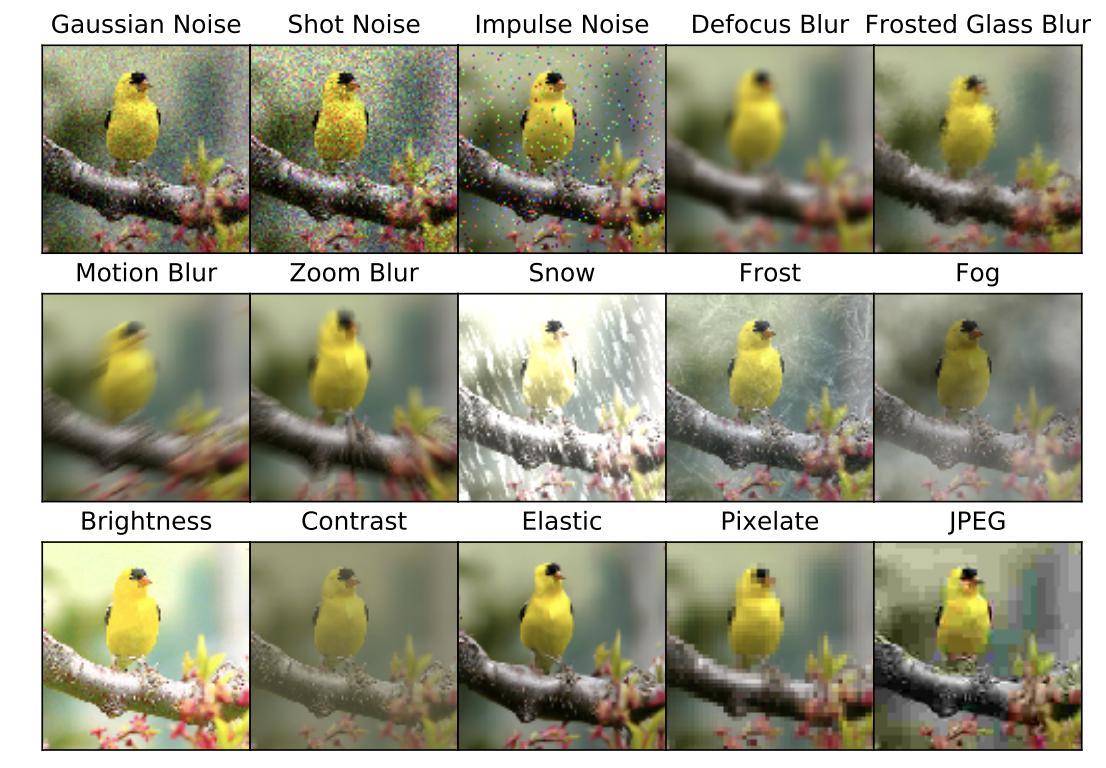}
  }
}
\caption{\label{figure:new_dataset} 
Our PACS-corrupted dataset consists of 64 environments, created by applying 15 different corruptions to the original 4 base environments of PACS. In the figures, (a) illustrates the 4 base environments of PACS and (b) shows the 15 corruptions~\citep{hendrycks2019benchmarking} utilized to increase the number of environments. Similarly, VLCS-corrupted dataset is constructed by applying the 15 filters to the VLCS dataset.
}
\end{figure}

We construct and examine two new datasets called PACS-corrupted and VLCS-corrupted. We begin with two commonly used DG datasets, PACS~\cite{li2017deeper} and VLCS~\cite{fang2013unbiased}. While they are realistic and popular DG datasets, each come with only 4 environments. For studying measures, it is necessary to increase the number of environments significantly. To address this problem, we apply the 15 corruptions utilized in \citet{hendrycks2019benchmarking} to the 4 base environments, resulting in a total of 64 environments (60 corrupted environments + 4 clean environments) in each dataset. We name the new datasets as PACS-corrupted and VLCS-corrupted where each dataset has 64 environments, i.e., $N_{\mathcal{E}_{\text{all}}}=64$. 
See Figure~\ref{figure:new_dataset} for examples of PACS and corruption filters. With the two new datasets, we can sample a reasonable number of given environments $\mathcal{E}_{\text{given}}$, N, from the 64 environments. 
In the experiment section, we first investigate experiment settings of $N=4$ and $N=8$. For the environments (\textit{Envs}) labeled as 1-1-1-1, we sample 4 given environments by selecting a single random corruption for each base environment. For the environments (\textit{Envs}) labeled as 2-2-2-2, we sample 8 given environments by selecting two random corruptions for each base environment. We additionally investigate given environments of 1-0-1-1 and 2-0-2-2.

\section{Experiment results}
\label{sec:experiment}

We compare the quality of two evaluation measures, $\mathcal{R}^{\text{W+G}}_{A}$ and $\mathcal{R}^{\text{AVG}}_{A}$, by comparing their correlation with 
$\mathcal{R}^{\text{IDEAL}}_{A}$.
Specifically, we utilize Spearman's $\rho$~\cite{spearman1961proof} and Kendall's $\tau$~\cite{kendall1938new} that have been widely utilized when comparing measures~\cite{jiang2019fantastic, vedantam2021empirical,yao2022improving,salaudeen2022target,eastwood2022probable}.

We follow the protocol of DomainBed~\cite{gulrajani2020search} for all the experiments. 
We conduct experiments using all 14 algorithms provided in DomainBed.
We reuse the default hyperparameter settings provided in the open-source code base of DomainBed.

\subsection{Results for SR-CMNIST}
\label{sec:correlation}

We calculate the two correlations and the results are shown in Table~\ref{tab:corr_metric}.
First of all, it can be observed that $\mathcal{R}^{\text{IDEAL}}_{A}$ has a stronger correlation with $\mathcal{R}^{\text{W+G}}_{A}$ than $\mathcal{R}^{\text{AVG}}_{A}$ for every single scenario. 
Second, $\mathcal{R}^{\text{W+G}}_{A}$ tends to be a robust measure regardless of the choice of scale or ratio. However, $\mathcal{R}^{\text{AVG}}_{A}$ is not as robust. The correlation value of $\mathcal{R}^{\text{AVG}}_{A}$ decreases significantly when \textit{Ratio} is $5:1$. In particular, the correlation becomes negative ($\rho=-0.458$ and $\tau=-0.377$) when \textit{Scale} is 1. The negative correlation means that a DG algorithm with a better average measure will likely perform worse for the unseen test dataset. 
Overall, the results in Table~\ref{tab:corr_metric} clearly indicate that $\mathcal{R}^{\text{W+G}}_{A}$ is preferred to $\mathcal{R}^{\text{AVG}}_{A}$.

\begin{table}[t!]
\begin{center}
\resizebox{1\columnwidth}{!}{
\begin{tabular}{cccccc}
\toprule
\multicolumn{2}{c}{SR-CMNIST}  & \multicolumn{2}{c}{Spearman’s $\rho$}                                                           & \multicolumn{2}{c}{Kendall’s $\tau$}                                                                                                                     \\ \cmidrule(lr){1-2} \cmidrule(l){3-4} \cmidrule(l){5-6} 
\multirow{1}{*}{\textit{Scale}} & \multirow{1}{*}{\textit{Ratio}}       & $\mathcal{R}^{\text{W+G}}_{A}$ (Ours)                     & $\mathcal{R}^{\text{AVG}}_{A}$                      & $\mathcal{R}^{\text{W+G}}_{A}$ (Ours)                     & $\mathcal{R}^{\text{AVG}}_{A}$                                     \\ \cmidrule(lr){1-2} \cmidrule(l){3-6} 
\multirow{3}{*}{1}                               & 3:1                              & \textbf{0.419 \footnotesize $\pm$ 0.174} & 0.166 \footnotesize $\pm$ 0.088  & \textbf{0.305 \footnotesize $\pm$ 0.135} & 0.063 \footnotesize $\pm$ 0.067 \\ 
                                & 4:1                             & \textbf{0.596 \footnotesize $\pm$ 0.234} & 0.299 \footnotesize $\pm$ 0.170  & \textbf{0.480 \footnotesize $\pm$ 0.202} & 0.209 \footnotesize $\pm$ 0.112   \\
                                & 5:1                             & \textbf{0.600 \footnotesize $\pm$ 0.188} & \underline{-0.458 \footnotesize $\pm$ 0.033} & \textbf{0.465 \footnotesize $\pm$ 0.153} & \underline{-0.377 \footnotesize $\pm$ 0.085} \\ \midrule
\multirow{3}{*}{2}                                & 3:1                             & \textbf{0.643 \footnotesize $\pm$ 0.041} & 0.600 \footnotesize $\pm$ 0.205  & \textbf{0.526 \footnotesize $\pm$ 0.055} & 0.490 \footnotesize $\pm$ 0.182   \\
                                & 4:1                             & \textbf{0.758 \footnotesize $\pm$ 0.071} & 0.471 \footnotesize $\pm$ 0.030  & \textbf{0.613 \footnotesize $\pm$ 0.098} & 0.365 \footnotesize $\pm$ 0.039  \\
                                & 5:1                             & \textbf{0.773 \footnotesize $\pm$ 0.099} & 0.315 \footnotesize $\pm$ 0.153  & \textbf{0.626 \footnotesize $\pm$ 0.129} & 0.277 \footnotesize $\pm$ 0.175  \\ \midrule
\multirow{3}{*}{3}                                & 3:1                             & \textbf{0.782 \footnotesize $\pm$ 0.081} & 0.543 \footnotesize $\pm$ 0.104  & \textbf{0.670 \footnotesize $\pm$ 0.082} & 0.430 \footnotesize $\pm$ 0.107   \\
                                & 4:1                             & \textbf{0.752 \footnotesize $\pm$ 0.014} & 0.644 \footnotesize $\pm$ 0.094  & \textbf{0.602 \footnotesize $\pm$ 0.004} & 0.457 \footnotesize $\pm$ 0.104   \\
                                & 5:1                             & \textbf{0.856 \footnotesize $\pm$ 0.081} & 0.391 \footnotesize $\pm$ 0.221  & \textbf{0.742 \footnotesize $\pm$ 0.107} & 0.346 \footnotesize $\pm$ 0.242  \\ \midrule
\multirow{3}{*}{4}                                & 3:1                             & \textbf{0.764 \footnotesize $\pm$ 0.052} & 0.540 \footnotesize $\pm$ 0.056  & \textbf{0.646 \footnotesize $\pm$ 0.033} & 0.429 \footnotesize $\pm$ 0.077 \\
                                & 4:1                             & \textbf{0.905 \footnotesize $\pm$ 0.076} & 0.721 \footnotesize $\pm$ 0.106  & \textbf{0.802 \footnotesize $\pm$ 0.093} & 0.551 \footnotesize $\pm$ 0.114   \\
                                & 5:1                             & \textbf{0.792 \footnotesize $\pm$ 0.081} & 0.235 \footnotesize $\pm$ 0.251  & \textbf{0.646 \footnotesize $\pm$ 0.062} & 0.175 \footnotesize $\pm$ 0.173  \\ \bottomrule
\end{tabular}
}
\end{center}
\caption{
Correlation with the ideal measure $\mathcal{R}^{\text{IDEAL}}_{A}$. The two measures, $\mathcal{R}^{\text{W+G}}_{A}$ and $\mathcal{R}^{\text{AVG}}_{A}$, are calculated for the 14 DomainBed algorithms using the SR-CMNIST dataset. Then, Spearman’s $\rho$ and Kendall’s $\tau$ with the ideal measure are evaluated. 
We report the mean and standard deviation based on three random seeds.
\label{tab:corr_metric}}
\end{table}

As a further analysis, we produce scatter plots for SR-CMNIST. For a given \textit{Scale} value, we investigate 14 DomainBed algorithms, 3 ratios (3:1, 4:1, and 5:1), and 3 random seeds. Therefore, we evaluate 126 raw values for each of the ideal measure, worst+gap measure, and average measure. Using the three sets of raw values, we create scatter plots as shown in Figure~\ref{fig:mot}.

Roughly speaking, the average measure tends to produce a constant-like error rate (i.e., $\mathcal{R}^{\text{AVG}}_{A}$) regardless of the true error rate (i.e., $\mathcal{R}^{\text{IDEAL}}_{A}$).
Clearly, the average measure can lead to incorrect insights and undesirable algorithm selections.  
Unlike the average measure, worst+gap measure (i.e., $\mathcal{R}^{\text{W+G}}_{A}$) tends to vary in proportion to the ideal measure as shown in Figure~\ref{fig:mot}(b), (c), and (d). Therefore, we can expect it to be more reliable for comparing DG algorithms and selecting a desirable one. An exception is the case of \textit{Scale} of one shown in Figure~\ref{fig:mot}(a). For \textit{Scale} of one, worst+gap measure tends to be also off from the ideal line of slope one. Obviously, worst+gap measure also suffers when $N$ is small for the SR-CMNIST dataset. Its effect can be confirmed in Table~\ref{tab:corr_metric} where the correlation values of worst+gap measure are relatively smaller for \textit{Scale} of one when compared to \textit{Scale} of two or larger. Nonetheless, the correlation values of worst+gap measure are larger than the correlation values of average measure even for \textit{Scale} of one, and worst+gap measure successfully selects the best-performing algorithms as can be confirmed from Figure~\ref{figure:best_sup}(a). The vertical spread of red dots in Figure~\ref{fig:mot}(a) is larger than the spread of blue dots, and most likely the larger spread and the relatively higher correlation contributed toward the superior algorithm selection results.

\begin{figure}[ht]
\resizebox{1.0\columnwidth}{!}{
\centering
  \subfigure[Scale: 1]{
    \includegraphics[height=0.35\columnwidth]{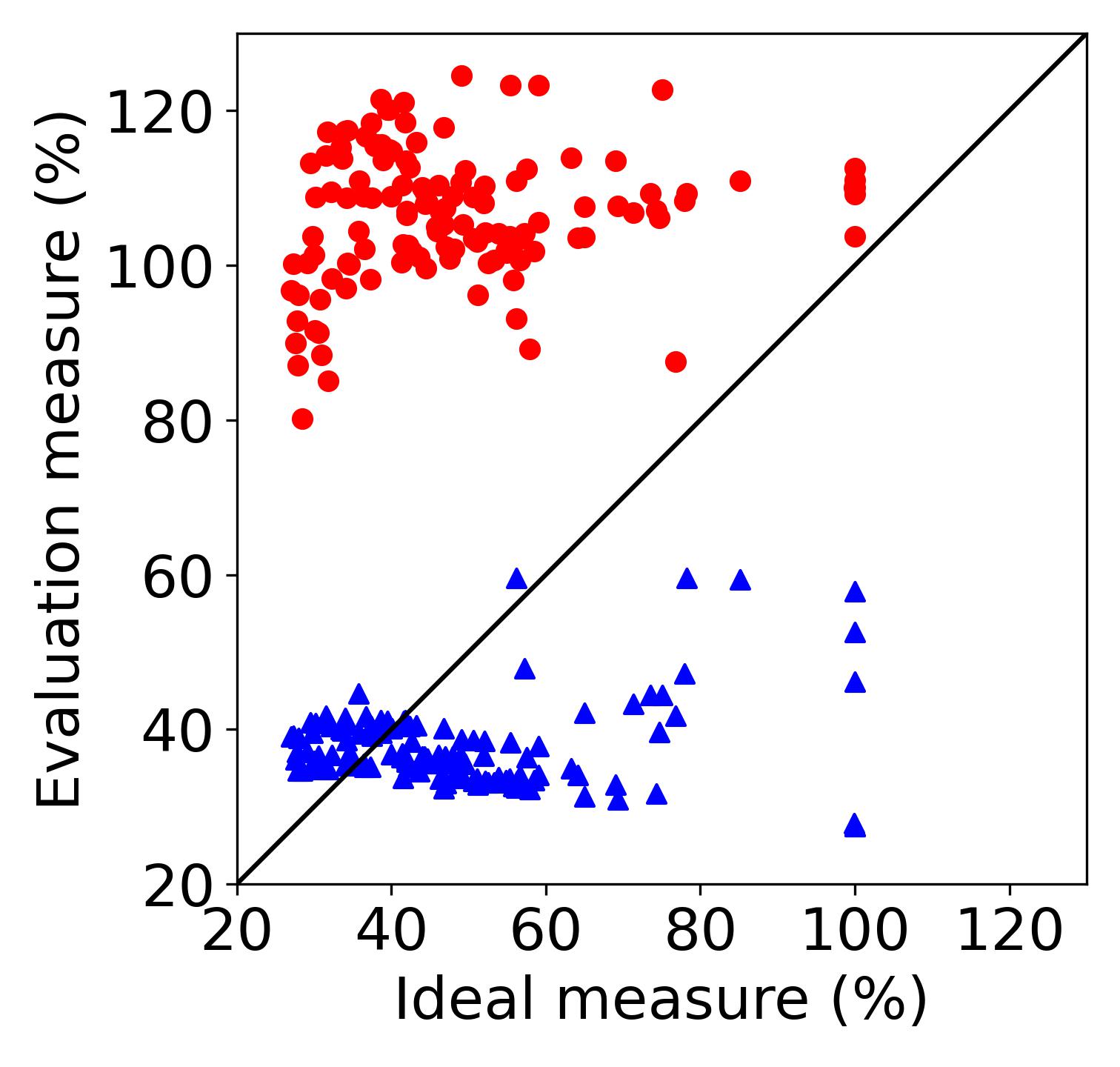}
  }
  \subfigure[Scale: 2]{
    \includegraphics[height=0.35\columnwidth]{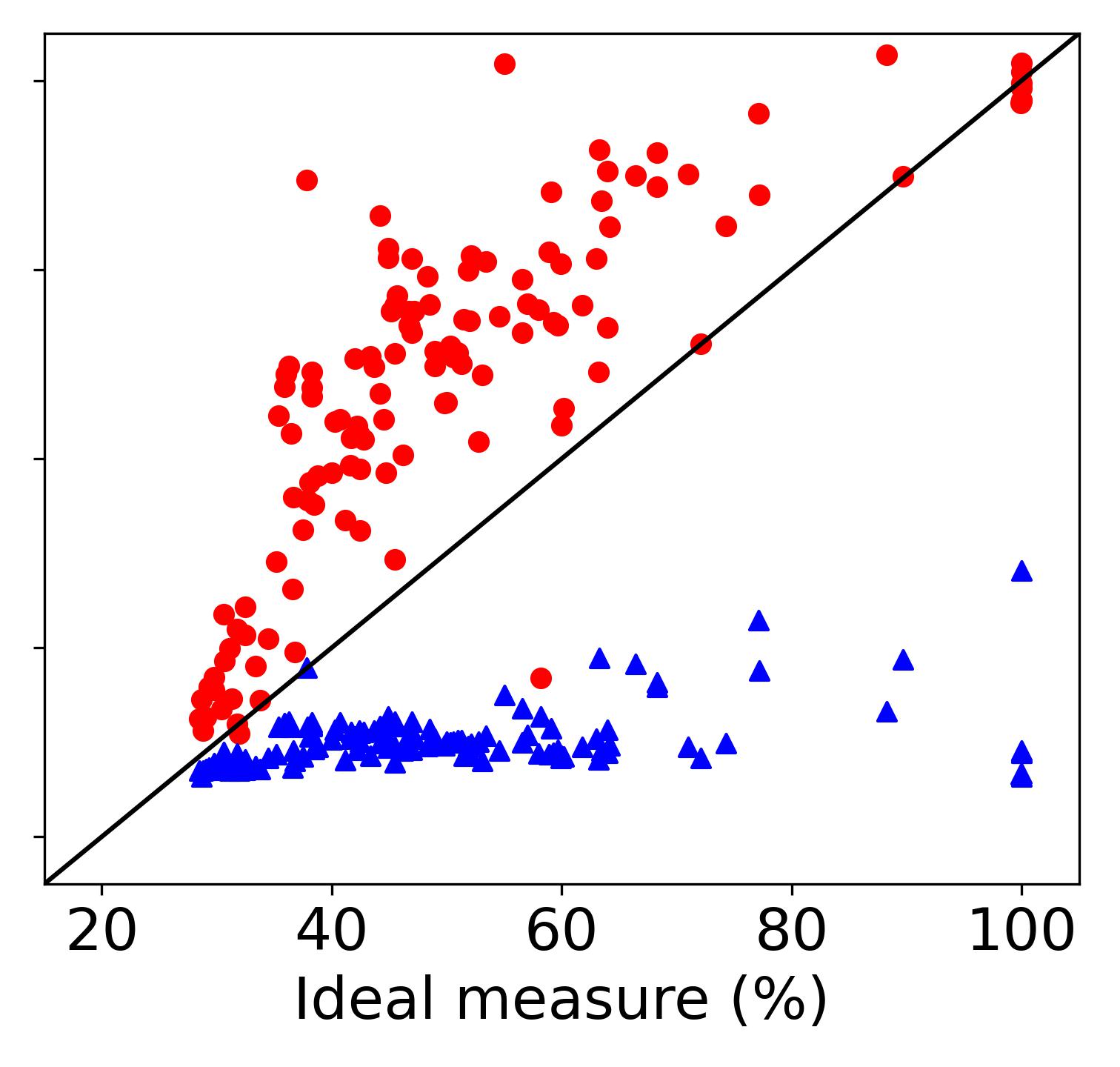}
  }
  \subfigure[Scale: 3]{
    \includegraphics[height=0.35\columnwidth]{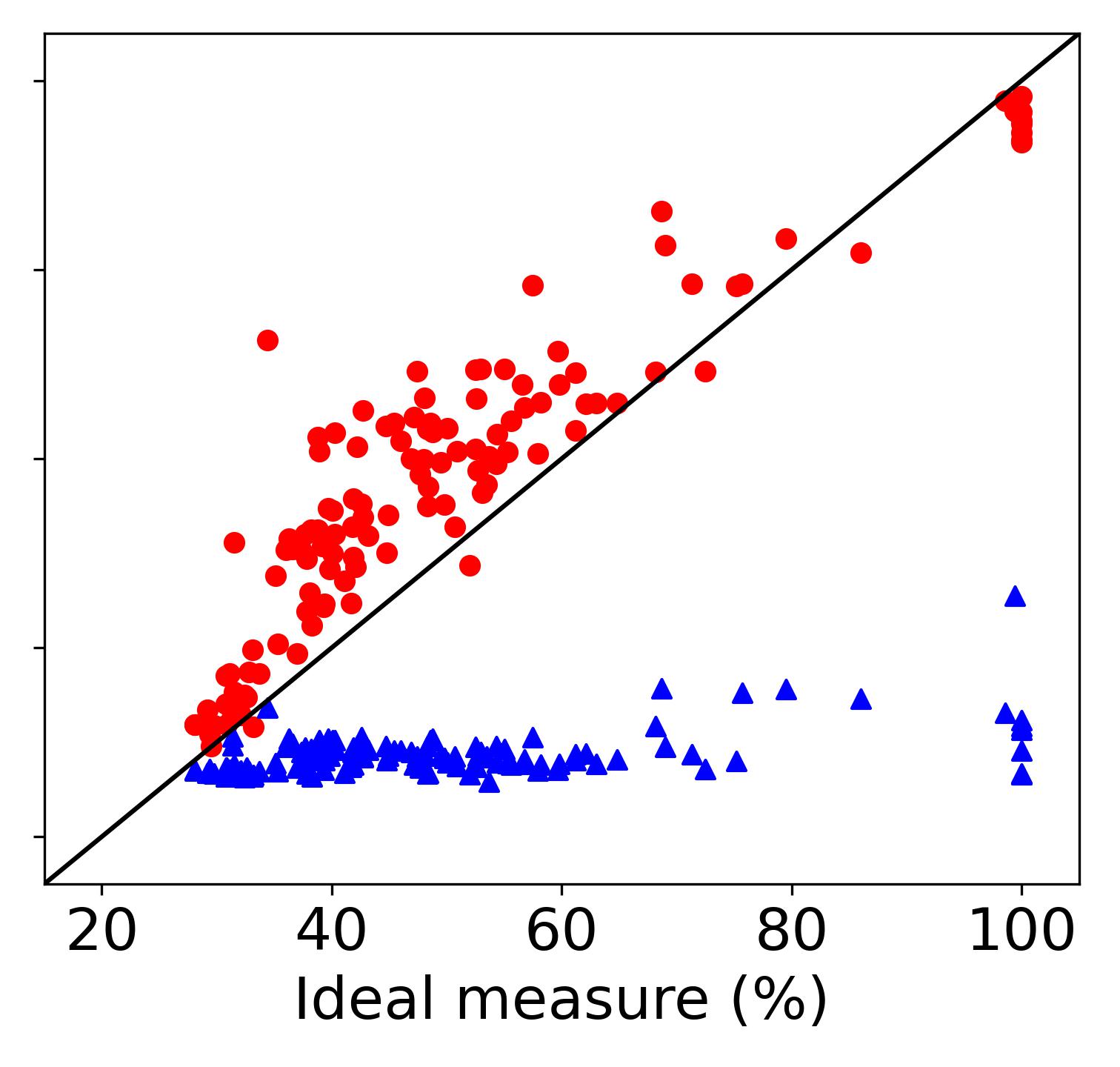}
  }
  \subfigure[Scale: 4]{
    \includegraphics[height=0.35\columnwidth]{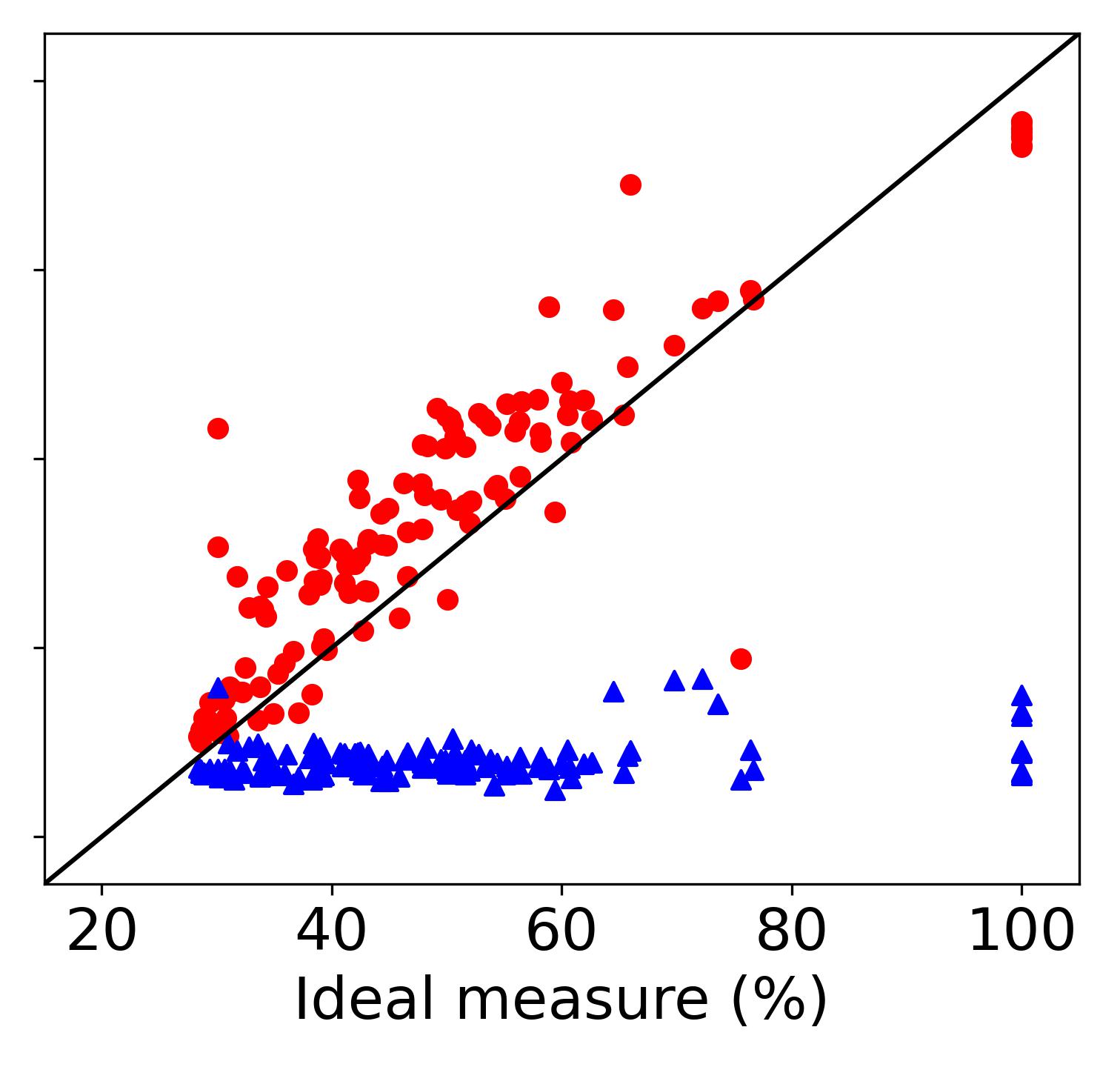}
  }
}
\caption{\label{fig:mot}
Scatter plots of the raw measure values. From the SR-CMNIST experiments, we gather the raw evaluation values for the ideal measure, worst+gap measure, and average measure. The scatter plots are produced using the three sets where the red dots are for `ideal measure vs. worst+gap measure' and the blue dots are for `ideal measure vs. average measure'. 
}
\end{figure}

\subsection{Results for C-Cats\&Dogs and L-CIFAR10 datasets}

While SR-CMNIST provides the advantage of twelve different combinations of \textit{Scale} and \textit{Ratio} for investigation, it is also limited in that it contains simple MNIST images. To overcome the limitation, we have created two additional datasets starting from two existing datasets~\cite{nagarajan2020understanding}. C-Cats\&Dogs dataset is relatively more realistic because it contains real-world images. Its task, however, is still a binary classification. L-CIFAR10 dataset contains real-word images as in C-Cats\&Dogs dataset. Furthermore, its 10-class classification is relatively more complex.

The result of correlation study is shown in Table~\ref{tab:corr_metric_cifar_catdog}. As in the case of SR-CMNIST, $\mathcal{R}^{\text{W+G}}_{A}$ correlates much better with $\mathcal{R}^{\text{IDEAL}}_{A}$. It can be observed that $\mathcal{R}^{\text{AVG}}_{A}$ even has negative correlations for C-Cats\&Dogs dataset. Overall, the results in Table~\ref{tab:corr_metric_cifar_catdog} are aligned with the findings from the SR-CMNIST dataset.

\begin{table}[t!]
\begin{center}
\resizebox{1\columnwidth}{!}{   
\begin{tabular}{cccccc}
\toprule
       &    & \multicolumn{2}{c}{Spearman’s $\rho$}                           & \multicolumn{2}{c}{Kendall’s $\tau$}                            \\
\cmidrule(l){1-2} \cmidrule(l){3-4} \cmidrule(l){5-6}
Dataset       & Ratio & $\mathcal{R}^{\text{W+G}}_{A}$ (Ours) & $\mathcal{R}^{\text{AVG}}_{A}$ & $\mathcal{R}^{\text{W+G}}_{A}$ (Ours) & $\mathcal{R}^{\text{AVG}}_{A}$ \\
\cmidrule(l){1-2} \cmidrule(l){3-4} \cmidrule(l){5-6}
\multirow{3}{*}{C-Cats\&Dogs} & 3:2   & \textbf{0.496 \footnotesize $\pm$ 0.233}                & 0.223 \footnotesize $\pm$ 0.081                & \textbf{0.370 \footnotesize $\pm$ 0.187}                & 0.158 \footnotesize $\pm$ 0.055                \\
                              & 4:1   & \textbf{0.641 \footnotesize $\pm$ 0.020}                & \underline{-0.549} \footnotesize $\pm$ 0.144               & \textbf{0.458 \footnotesize $\pm$ 0.071}                & \underline{-0.429} \footnotesize $\pm$ 0.134               \\
                              & 5:0   & \textbf{0.253 \footnotesize $\pm$ 0.397}                & \underline{-0.482} \footnotesize $\pm$ 0.162               & \textbf{0.231 \footnotesize $\pm$ 0.324}                & \underline{-0.335} \footnotesize $\pm$ 0.124               \\
\cmidrule(l){1-1} \cmidrule(l){2-2} \cmidrule(l){3-4} \cmidrule(l){5-6}
\multirow{3}{*}{L-CIFAR10}    & 3:2   & \textbf{0.600 \footnotesize $\pm$ 0.177}                & 0.338 \footnotesize $\pm$ 0.074                & \textbf{0.451 \footnotesize $\pm$ 0.174}                & 0.319 \footnotesize $\pm$ 0.101                \\
                              & 4:1   & \textbf{0.632 \footnotesize $\pm$ 0.094}                & 0.226 \footnotesize $\pm$ 0.113                & \textbf{0.470 \footnotesize $\pm$ 0.048}                & 0.191 \footnotesize $\pm$ 0.094                \\
                              & 5:0   & \textbf{0.808 \footnotesize $\pm$ 0.007}                & 0.673 \footnotesize $\pm$ 0.098                & \textbf{0.641 \footnotesize $\pm$ 0.034}                & 0.546 \footnotesize $\pm$ 0.113                \\
\bottomrule
\end{tabular}
}
\caption{
Correlation with the ideal measure $\mathcal{R}^{\text{IDEAL}}_{A}$. The experiment was repeated for C-Cats\&Dogs and L-CIFAR10 datasets. We report the mean and standard deviation based on three random seeds. Similar results as in Table~\ref{tab:corr_metric} are obtained. 
\label{tab:corr_metric_cifar_catdog}
}
\end{center}
\end{table}

\subsection{Results for real-world datasets}

In the preceding experiments, we validated a variety of combinations by adjusting the Ratio and Scale of three different datasets. However, it is imperative to validate the findings on real-world datasets. Therefore, we conduct experiments on PACS-corrupted and VLCS-corrupted that are derived from the widely used DG datasets PACS and VLCS.

The correlation results are shown in Table~\ref{tab:realworld_full}. The experiment was carried out for \textit{Envs} 1-1-1-1 and 2-2-2-2. Similar to the previous experiments, $\mathcal{R}^{\text{W+G}}_{A}$ exhibits a higher correlation with $\mathcal{R}^{\text{IDEAL}}_{A}$ for the real-world datasets. As shown in the table, the results reveal that the worst+gap measure consistently achieves a higher Spearman's $\rho$ correlation across all 12 different seeds. For Kendall's $\tau$, the worst+gap measure achieves a higher correlation for 10 seeds, the same as the average measure for 1 seed, and a lower correlation for 1 seed. 
In addition to the results shown in Table~\ref{tab:realworld_full}, we have also investigated \textit{Envs} of 1-0-1-1 and 2-0-2-2. As shown in Table~\ref{tab:realworld_biased}, the worst+gap measure consistently achieves a higher correlation for all experiment cases with different seeds. 
Overall, $\mathcal{R}^{\text{W+G}}_{A}$ clearly outperforms $\mathcal{R}^{\text{AVG}}_{A}$ for the real-world datasets.

\begin{table}[ht]
\begin{center}
\resizebox{0.7\textwidth}{!}{    
\begin{tabular}{ccccccc}
\toprule
\multirow{2}{*}{\textit{Dataset}}  & \multirow{2}{*}{\textit{Envs}}    & \multirow{2}{*}{\textit{Seed}}       & \multicolumn{2}{c}{Spearman’s $\rho$}                           & \multicolumn{2}{c}{Kendall’s $\tau$}                            \\
\cmidrule(l){4-5} \cmidrule(l){6-7}
       &  &   & $\mathcal{R}^{\text{W+G}}_{A}$ & $\mathcal{R}^{\text{AVG}}_{A}$ & $\mathcal{R}^{\text{W+G}}_{A}$ & $\mathcal{R}^{\text{AVG}}_{A}$ \\
\cmidrule(l){1-1} \cmidrule(l){2-2} \cmidrule(l){3-3} \cmidrule(l){4-5} \cmidrule(l){6-7}
\multirow{6}{*}{PACS-corrupted} & \multirow{3}{*}{1-1-1-1} & 0                    & \textbf{0.420}       & 0.335                & \textbf{0.385}       & 0.256                \\
                 &         & 1                    & \textbf{0.918}       & 0.890                & \textbf{0.744}       & 0.718                \\
                 &         & 2                    & \textbf{0.822}       & 0.780                & \textbf{0.684}       & 0.641                \\
\cmidrule(l){2-7}
                 & \multirow{3}{*}{2-2-2-2} & 0                    & \textbf{0.856}       & 0.835                & \textbf{0.667}       & \textbf{0.667}       \\
                 &         & 1                    & \textbf{0.822}       & 0.681                & \textbf{0.692}       & 0.564                \\
                 &         & 2                    & \textbf{0.617}       & 0.613                & \textbf{0.452}       & 0.426                \\
\midrule
\multirow{6}{*}{VLCS-corrupted} & \multirow{3}{*}{1-1-1-1} & 0                    & \textbf{0.495}       & 0.319                & \textbf{0.333}       & 0.282                \\
                 &         & 1                    & \textbf{0.478}       & 0.456                & 0.359                & \textbf{0.385}       \\
                 &         & 2                    & \textbf{0.772}       & 0.670                & \textbf{0.684}       & 0.410                \\
\cmidrule(l){2-7}
                 & \multirow{3}{*}{2-2-2-2} & 0                    & \textbf{0.907}       & 0.857                & \textbf{0.761}       & 0.692                \\
                 &         & 1                    & \textbf{0.900}       & 0.829                & \textbf{0.753}       & 0.684                \\
                 &         & 2                    & \textbf{0.454}       & 0.427                & \textbf{0.338}       & 0.323                \\
\bottomrule
\end{tabular}
}
\caption{
Correlation with the ideal measure $\mathcal{R}^{\text{IDEAL}}_{A}$. The experiment was repeated for PACS-corrupted and VLCS-corrupted. We report the results for all seeds.
\label{tab:realworld_full}
}
\end{center}
\end{table}

\begin{table}[ht]
\begin{center}
\resizebox{0.7\textwidth}{!}{    
\begin{tabular}{ccccccc}
\toprule
\multirow{2}{*}{\textit{Dataset}}  & \multirow{2}{*}{\textit{Envs}}    & \multirow{2}{*}{\textit{Seed}}       & \multicolumn{2}{c}{Spearman’s $\rho$}                           & \multicolumn{2}{c}{Kendall’s $\tau$}                            \\
\cmidrule(l){4-5} \cmidrule(l){6-7}
       &  &   & $\mathcal{R}^{\text{W+G}}_{A}$ & $\mathcal{R}^{\text{AVG}}_{A}$ & $\mathcal{R}^{\text{W+G}}_{A}$ & $\mathcal{R}^{\text{AVG}}_{A}$ \\
\cmidrule(l){1-1} \cmidrule(l){2-2} \cmidrule(l){3-3} \cmidrule(l){4-5} \cmidrule(l){6-7}
\multirow{6}{*}{PACS-corrupted} & \multirow{3}{*}{1-0-1-1} & 0 & \textbf{0.642} & 0.415 & \textbf{0.503} & 0.323 \\
                                     &                             & 1 & \textbf{0.265} & 0.121 & \textbf{0.179} & 0.077 \\
                                     &                             & 2 & \textbf{0.573} & 0.445 & \textbf{0.436} & 0.333 \\
\cmidrule(l){2-7}
                 & \multirow{3}{*}{2-0-2-2} & 0 & \textbf{0.527} & 0.451 & \textbf{0.385} & 0.359 \\
                                     &                             & 1 & \textbf{0.382} & 0.360 & \textbf{0.348} & 0.245 \\
                                     &                             & 2 & \textbf{0.515} & 0.447 & \textbf{0.458} & 0.301 \\
\bottomrule
\end{tabular}
}
\caption{
Correlation with the ideal measure $\mathcal{R}^{\text{IDEAL}}_{A}$. The experiment was repeated for PACS-corrupted. We report the results for all seeds.
\label{tab:realworld_biased}
}
\end{center}
\end{table}
\section{Discussions}
\label{sec:discussion}

\subsection{Effectiveness in selecting the best-performing algorithm}
\label{sec:best_algorithm}

The practical purpose of an evaluation measure lies in selecting the truly best-performing algorithm. For DG, this can be interpreted as selecting an algorithm $A$ with the lowest error rate of $\mathcal{R}^{\text{IDEAL}}_{A}$. To compare the two measures of interest, we define $|\mathcal{R}^{\text{IDEAL}}_{A^{*}_{\text{IDEAL}}} - \mathcal{R}^{\text{IDEAL}}_{A^{*}_{\text{W+G}}}|$ as the performance degradation of the worst+gap measure and 
$|\mathcal{R}^{\text{IDEAL}}_{A^{*}_{\text{IDEAL}}} - \mathcal{R}^{\text{IDEAL}}_{A^{*}_{\text{AVG}}}|$ as the performance degradation of the average measure.
$A^{*}_{\text{IDEAL}}$, $A^{*}_{\text{W+G}}$, and $A^{*}_{\text{AVG}}$ are the best algorithms selected with respect to the ideal measure, worst+gap measure, and average measure, respectively. $A^{*}_{\text{IDEAL}}$ is the truly best algorithm because it is based on the oracle measure.

Figure~\ref{figure:best_sup} shows the experiment results for SR-CMNIST. As in Table~\ref{tab:corr_metric}, performance degradation was evaluated for a variety of \textit{Scale} and \textit{Ratio} scenarios. For all scenarios, it can be observed that worst+gap measure successfully selects the best-performing algorithm and ends up with almost zero performance degradation. Considering that the correlation results in Table~\ref{tab:corr_metric} was positive but not perfect, the results in Figure~\ref{figure:best_sup} implies that worst+gap measure is effective for algorithm selection despite its imperfect correlation with the ideal measure. Therefore, worst+gap is a practically superior measure for SR-CMNIST. In contrast, the average measure's performance degradation can be as large as around $70\%$. As in Table~\ref{tab:corr_metric}, we can observe that the average measure tends to be less effective when the \textit{Ratio} is 5:1.

For C-Cats\&Dogs dataset, we have also repeated the performance degradation study and have obtained similar results. The performance degradation was $10.1\%$ for the worst+gap measure but $39.3\%$ for the average measure. For L-CIFAR10 dataset, the error rates were generally much smaller but the trend was the same ($2.1\%$ vs. $3.0\%$).

\begin{figure}[th]
  \centering 
  \subfigure[\textit{Scale}=1]{
    \includegraphics[height=0.205\columnwidth]{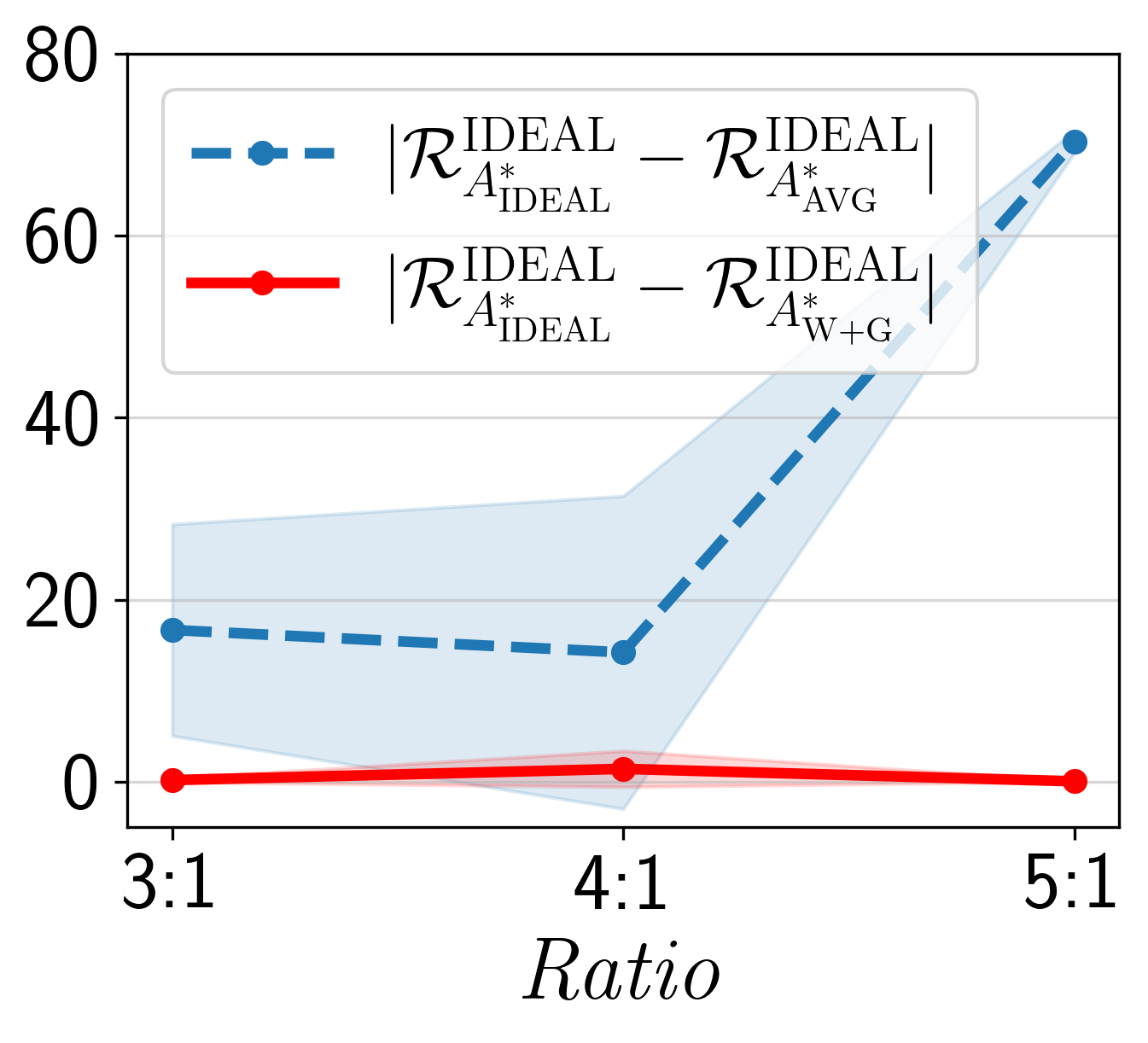}
  }
  \subfigure[\textit{Scale}=2]{
    \includegraphics[height=0.2\columnwidth]{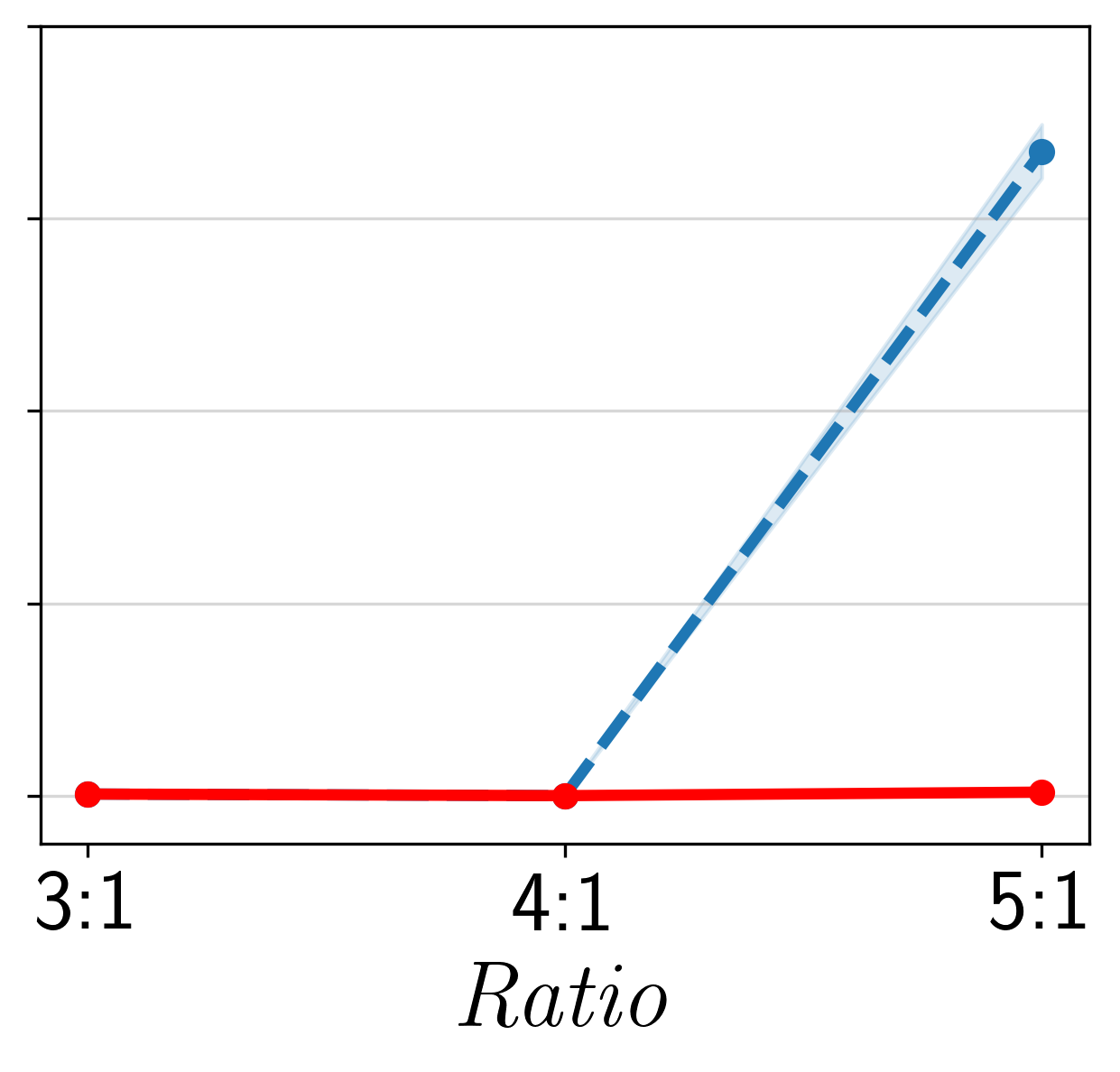}
  }
  \subfigure[\textit{Scale}=3]{
    \includegraphics[height=0.205\columnwidth]{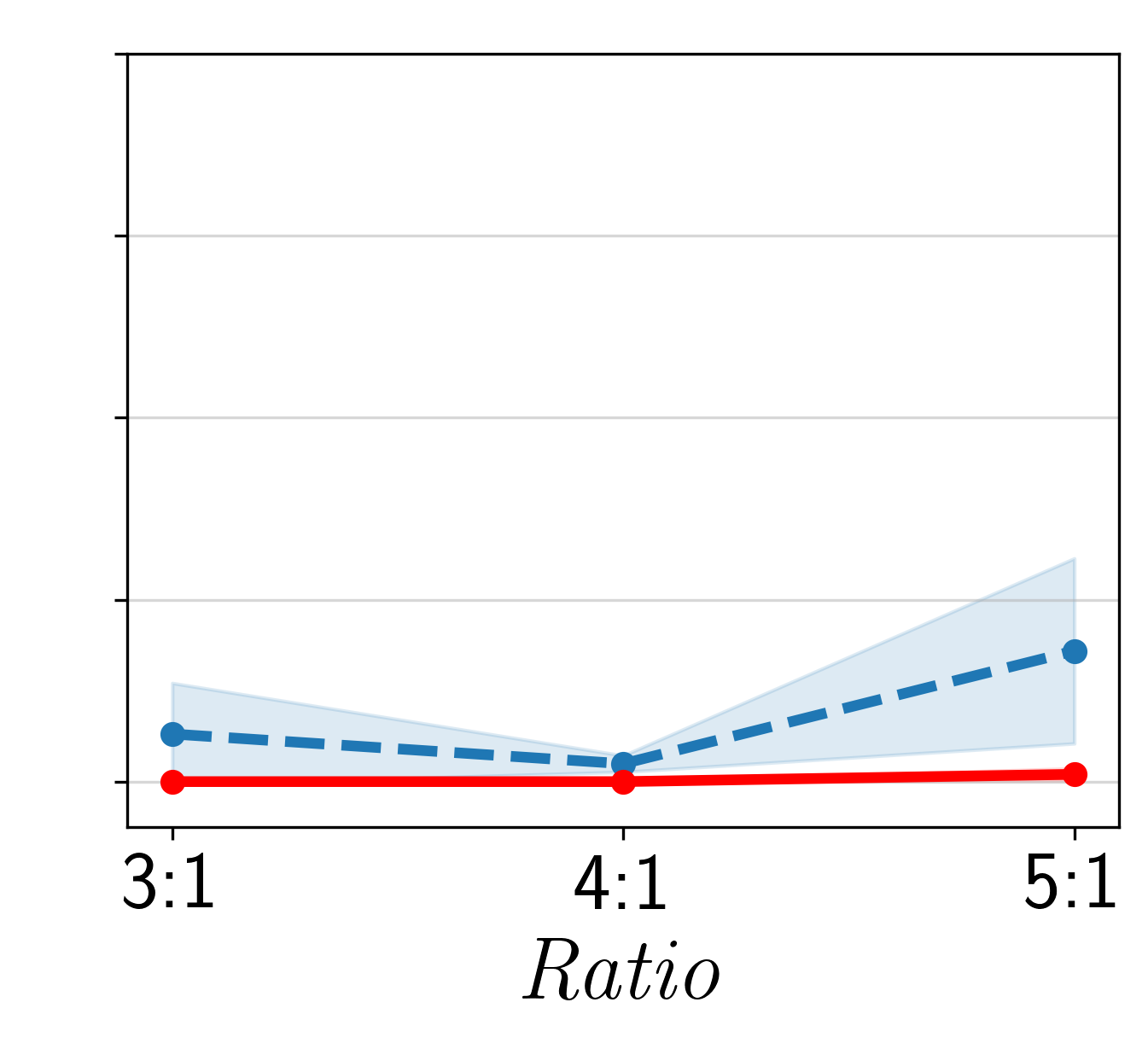}
  }
  \subfigure[\textit{Scale}=4]{
    \includegraphics[height=0.2\columnwidth]{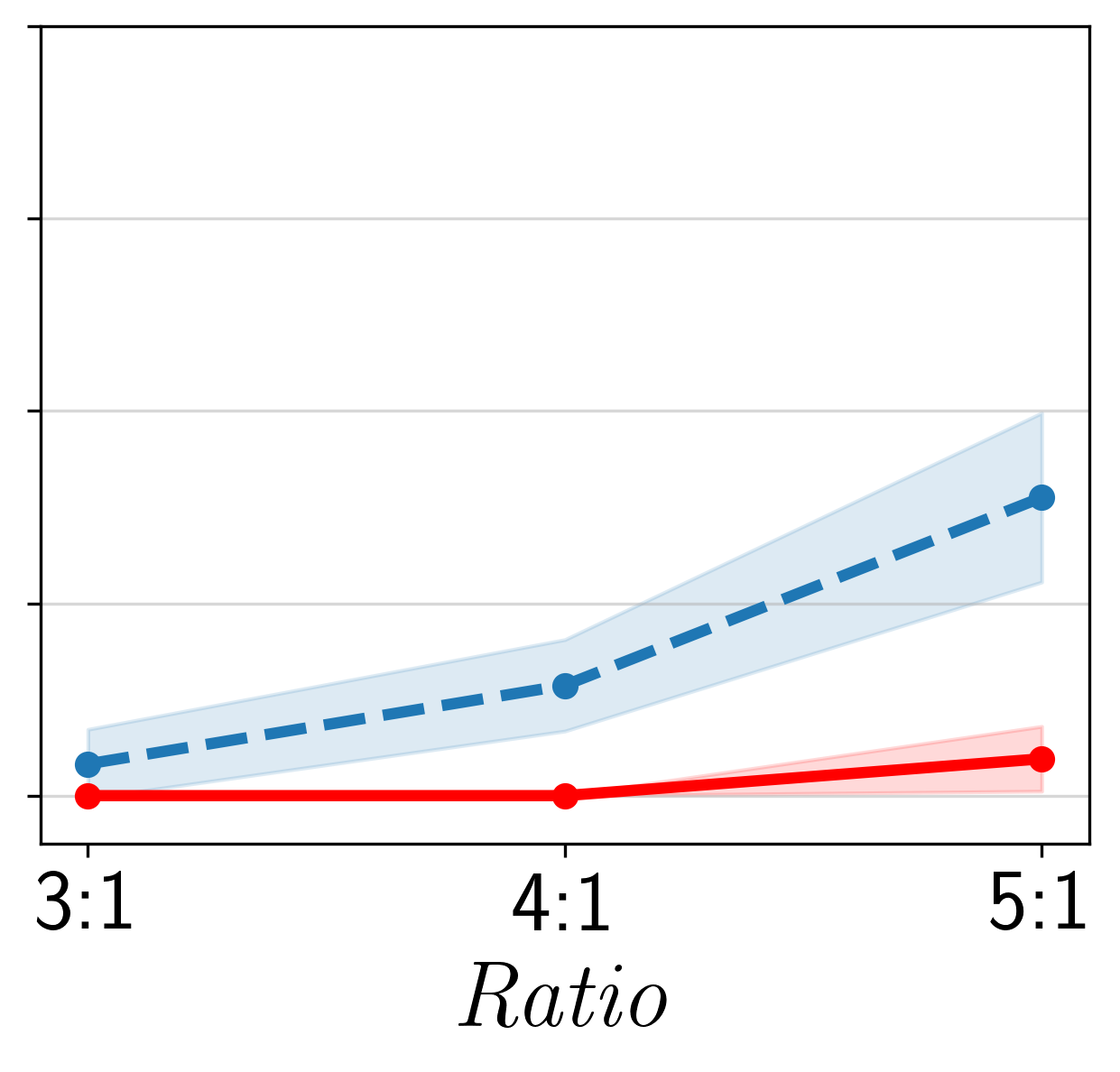}
  }
\caption{\label{figure:best_sup} 
Performance degradation of the selected algorithm with respect to the oracle selection's performance. SR-CMNIST dataset is investigated and the standard deviation is shown as a shaded area. 
}
\end{figure}

\subsection{Worst only, gap only, and worst+gap}
\label{sec:individual}

In Section~\ref{sec:theory}, we have shown that both worst and gap terms emerge under two different assumptions. To confirm if indeed both terms are required for establishing a robust measure, we have compared worst only measure, gap only measure, and worst+gap measure by evaluating  their correlation with the ideal measure $\mathcal{R}^{\text{IDEAL}}_{A}$. 

The results are shown in Table~\ref{tab:individual_spear}. Obviously, none of the three clearly outperforms the other two. However, an interesting observation can be made. For SR-CMNIST, worst only is the best measure and worst+gap is the second best. For C-Cats\&Dogs, gap only is the best measure and again worst+gap is the second best. Finally, worst+gap is the best measure for L-CIFAR10. In this case, the correlations of both worst only and gap only measures are much smaller than worst+gap. 

Overall, worst+gap stays as a decent second best for SR-CMNIST and C-Cats\&Dogs, and it is the best for L-CIFAR10. In contrast, worst only and gap only are much less robust because they can become a clearly inferior measure depending on the characteristics of the dataset. 
As explained earlier, worst only measure can lack invariance and gap only measure can lack predictive capability. 
Hence, it is reasonable to consider that the worst+gap measure is the most robust choice for the experiments outlined in Table~\ref{tab:individual_spear}.

\begin{table}[t!]
\centering
\resizebox{1\columnwidth}{!}{
\begin{tabular}{cccc}
\toprule
Spearman's $\rho$          & Worst only measure          & Gap only measure   & Worst+gap measure          \\
\cmidrule(l){1-1} \cmidrule(l){2-4}
SR-CMNIST                       & \textbf{0.749} & 0.694      & 0.720       \\
C-Cats\&Dogs                     & \underline{0.556}          & \textbf{0.662} & 0.641   \\
L-CIFAR10                 & \underline{0.454}          & \underline{0.460}     & \textbf{0.632}       \\
\midrule
\midrule
Average & 0.586&	0.605&	\textbf{0.664}\\
\bottomrule
\toprule
Kendall's $\tau$          & Worst only measure          & Gap only measure   & Worst+gap measure          \\
\cmidrule(l){1-1} \cmidrule(l){2-4}
SR-CMNIST                       & \textbf{0.621} & 0.573 & 0.594 \\
C-Cats\&Dogs                     & \underline{0.411} & \textbf{0.516} & 0.458 \\
L-CIFAR10                 & \underline{0.337} &\underline{0.349} & \textbf{0.470} \\
\midrule
\midrule
Average & 0.456 & 0.479 & \textbf{0.507} \\
\bottomrule
\end{tabular}
}
\caption{
Comparison of worst+gap with two individual measures. Correlation with the ideal measure $\mathcal{R}^{\text{IDEAL}}_{A}$ is investigated using three benchmark datasets. The best measures are shown in bold and the clearly inferior measures are indicated with underlines.
\label{tab:individual_spear}
}
\end{table}

\subsection{Asymptotic behavior for increasing $N$}
\label{sec:effect_n}

For a well behaving DG measure, it is reasonable to expect the measure to converge to the ideal measure as the number of given environments $N$ increases. 
In Figure~\ref{fig:diff_wg_ideal}, we are showing how $|\mathcal{R}^{\text{IDEAL}}_{A} - \mathcal{R}^{\text{W+G}}_{A}|$ and $|\mathcal{R}^{\text{IDEAL}}_{A} - \mathcal{R}^{\text{AVG}}_{A}|$ vary as $N$ becomes larger for the 14 DomainBed algorithms and SR-CMNIST. \textit{Scale} is used as a surrogate for increasing $N$. Normalized values are plotted with respect to the case of \textit{Scale}$=$1. For all three scenarios, it can be observed that worst+gap measure converges to the ideal measure with increasing $N$. However, the average measure diverges from the ideal measure.

\begin{figure}[t!]
\resizebox{1.0\columnwidth}{!}{
  \centering
  \hspace{-0.2cm}
  \subfigure[\textit{Ratio}$=$3:1]{
    \includegraphics[height=0.41\columnwidth]{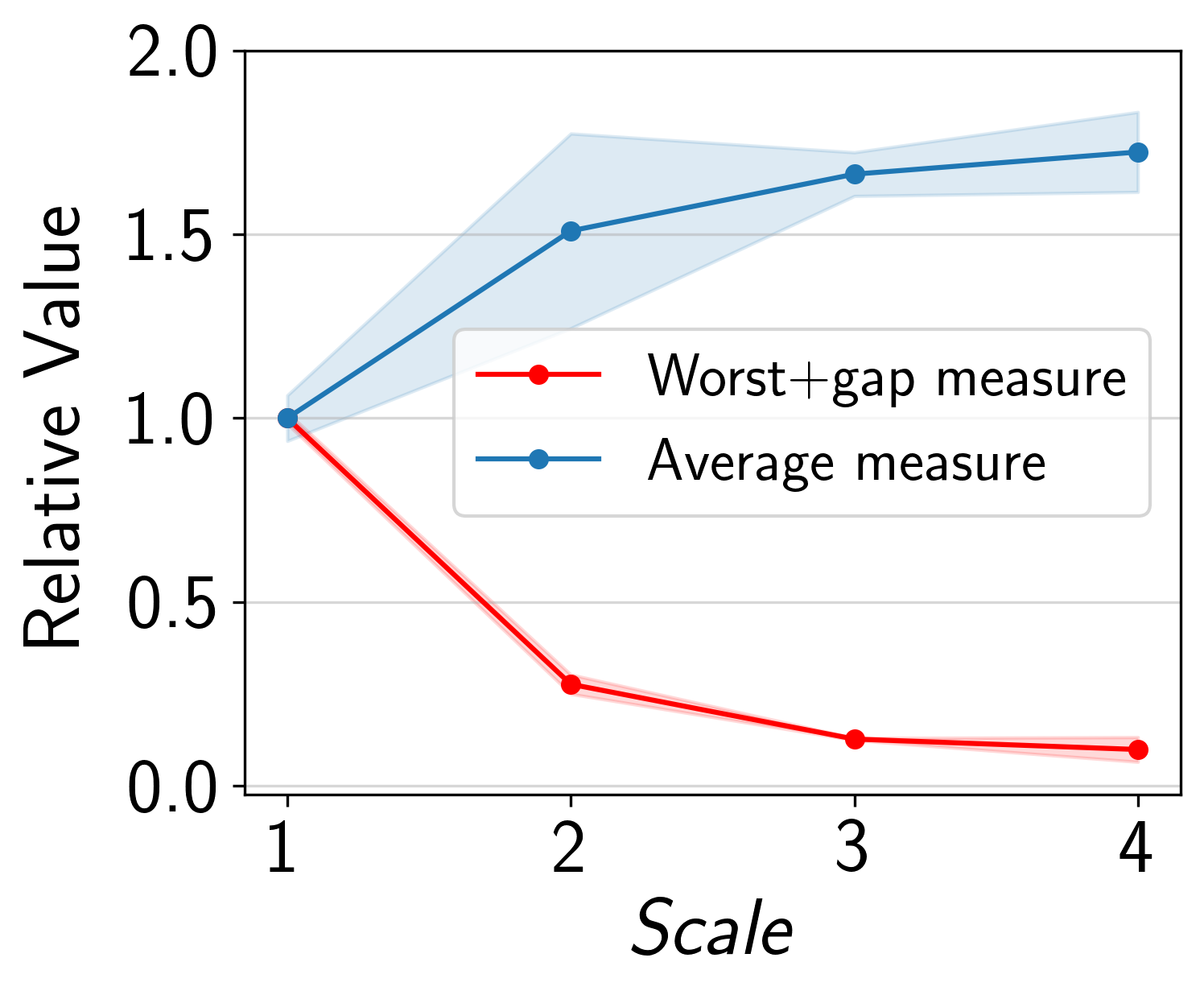}
  }
  \hspace{-0.39cm}
  \subfigure[\textit{Ratio}$=$4:1]{
    \includegraphics[height=0.4\columnwidth]{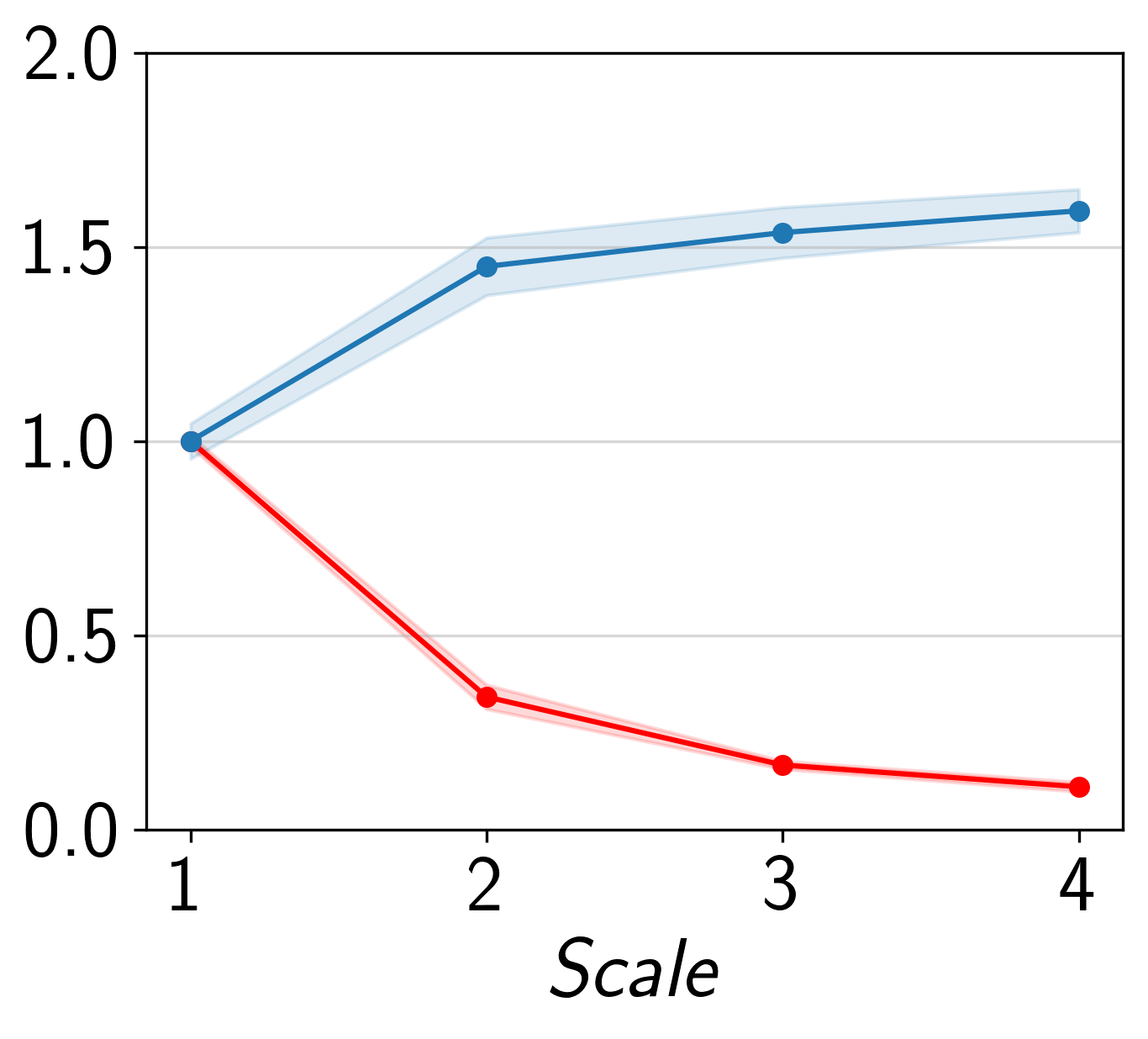}
  }
  \hspace{-0.39cm}
  \subfigure[\textit{Ratio}$=$5:1]{
    \includegraphics[height=0.4\columnwidth]{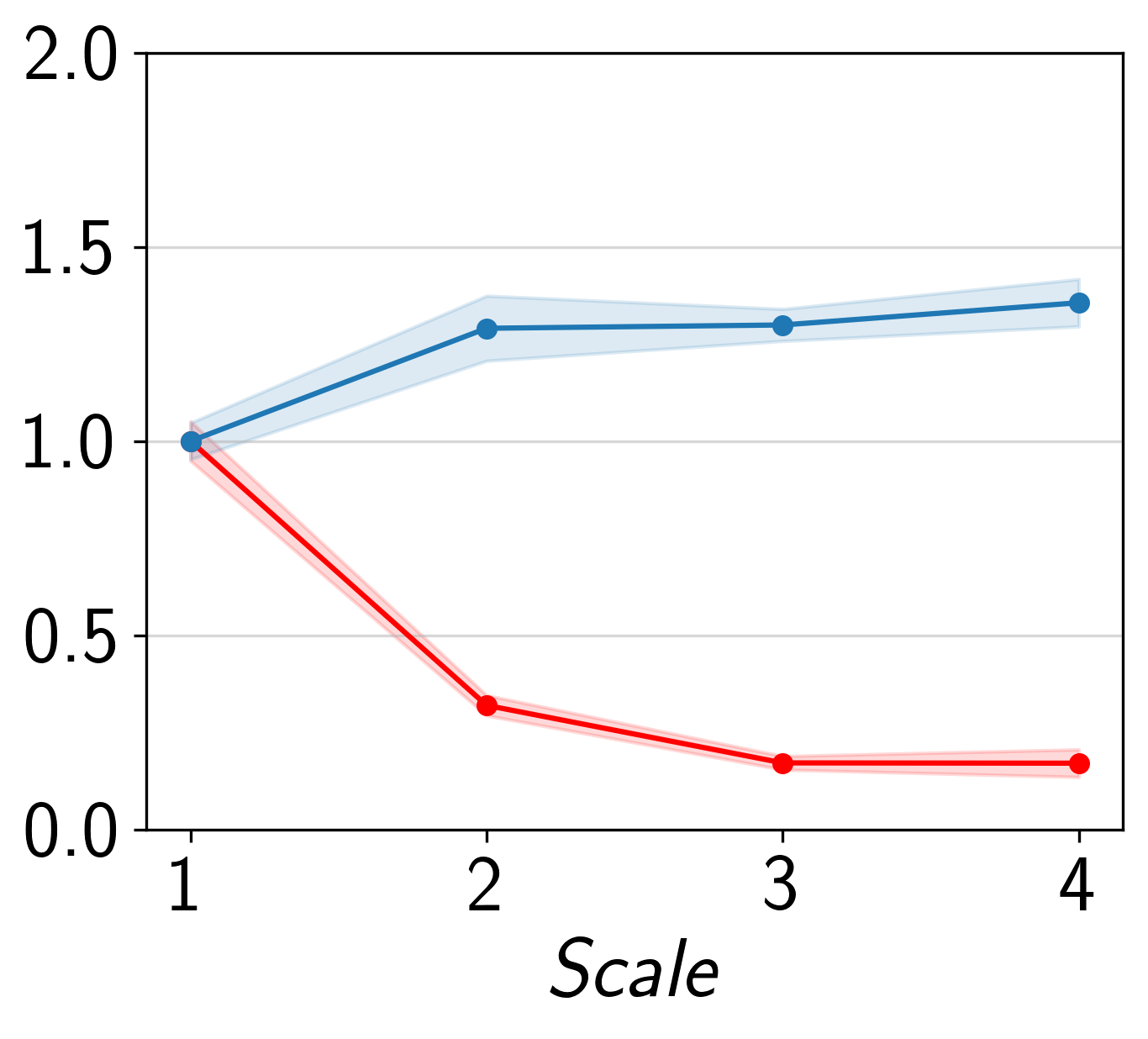}
  }
  \hspace{-0.2cm}
}
\caption{
Asymptotic behavior of $\mathcal{R}^{\text{W+G}}_{A}$ and $\mathcal{R}^{\text{AVG}}_{A}$ as the number of given environments $N$ increases. Normalized values of $|\mathcal{R}^{\text{IDEAL}}_{A} - \mathcal{R}^{\text{W+G}}_{A}|$ and $|\mathcal{R}^{\text{IDEAL}}_{A} - \mathcal{R}^{\text{AVG}}_{A}|$ are plotted for the 14 DomainBed algorithms and SR-CMNIST dataset. Note that $N$ is proportional to \textit{Scale}. Therefore, $N$ increases as \textit{Scale} increases.  
\label{fig:diff_wg_ideal}
}
\end{figure}

\subsection{Failure of ERM}
\label{sec:fulltable_erm}

We compare ERM with the best-performing algorithm $A^{*}_{\text{IDEAL}}$ across various \textit{Scale} on SR-CMNIST. 
In Figures~\ref{c.2.2}, \ref{c.2.3}, and \ref{c.2.4}, when the worst+gap measure is used, similar trends emerge as observed with the ideal measure.
When the average measure is used, however, the performance difference between ERM and $A^{*}_{\text{IDEAL}}$ becomes nearly imperceptible regardless of \textit{Scale} in each figure.

Parallel to the findings in Section~\ref{fig:mot}, an exception arises in the case of \textit{Scale}$=1$ (Figures~\ref{c.2.1}).
For \textit{Scale} of one, the worst+gap measure also diverges from the ideal measure.
Nevertheless, a significant performance difference between ERM and $A^{*}_{\text{IDEAL}}$ persists when $\mathcal{R}^{\text{W+G}}_{A}$ is used.
Furthermore, as discussed in Section~\ref{sec:correlation}, the correlation values of worst+gap measure are larger than the correlation values of average measure even for \textit{Scale} of one, and worst+gap measure successfully selects the best-performing algorithms as can be confirmed from Figure~\ref{figure:best_sup}(a) in the manuscript.

\begin{figure}[ht]
  \centering
  \subfigure[]{
    \includegraphics[width=0.2\columnwidth]{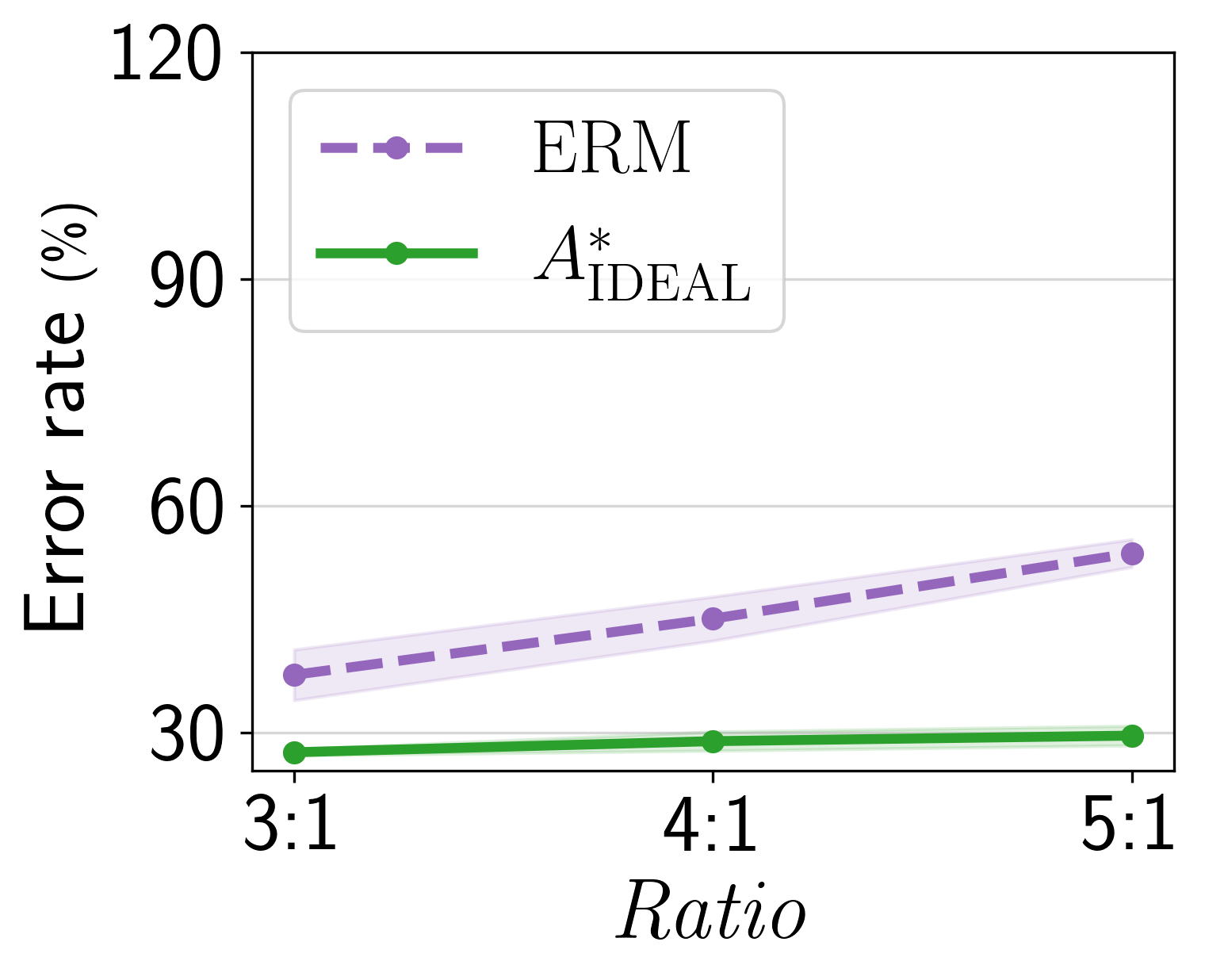}
  }
  \subfigure[]{
    \includegraphics[width=0.2\columnwidth]{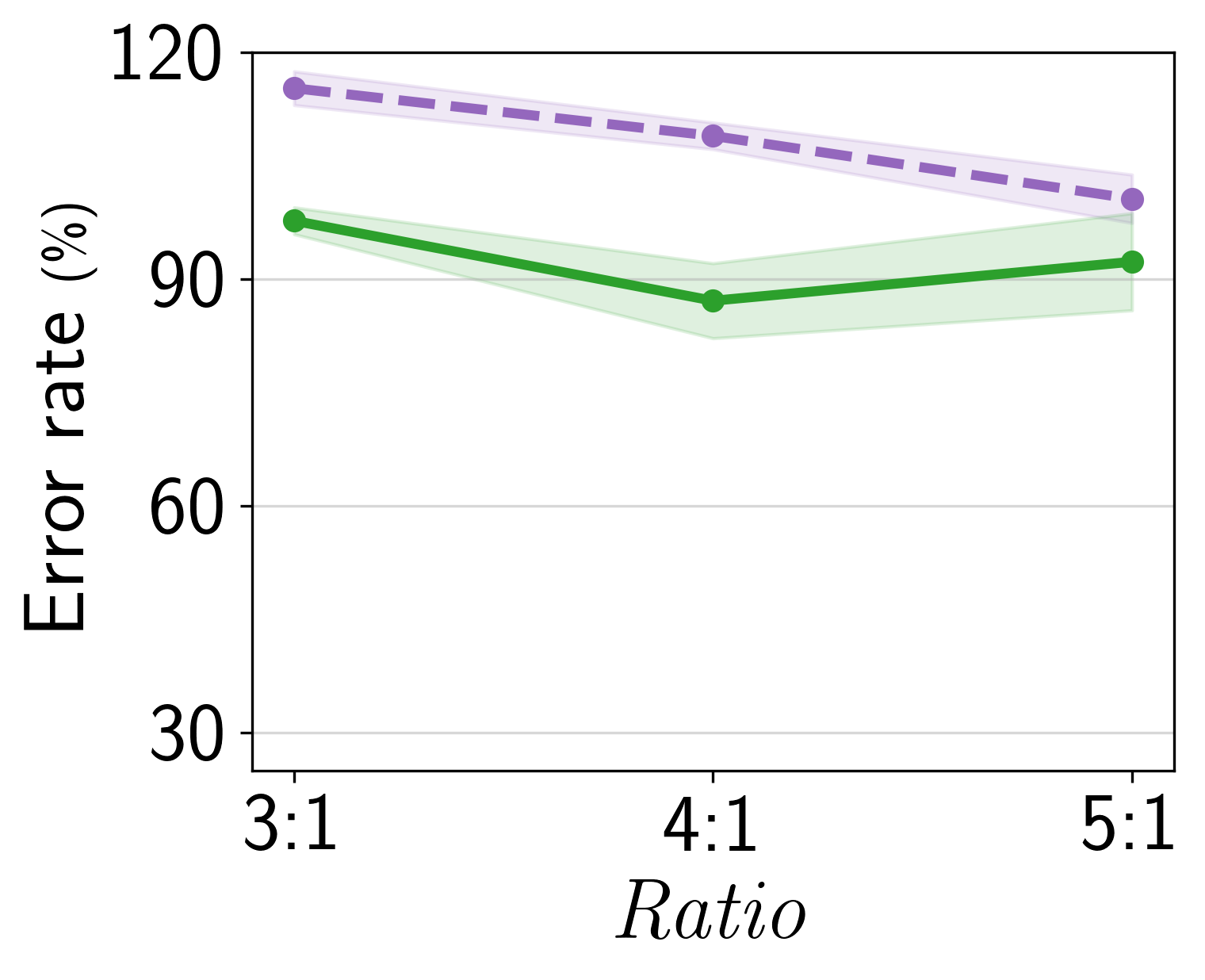}
  }
  \subfigure[]{
    \includegraphics[width=0.2\columnwidth]{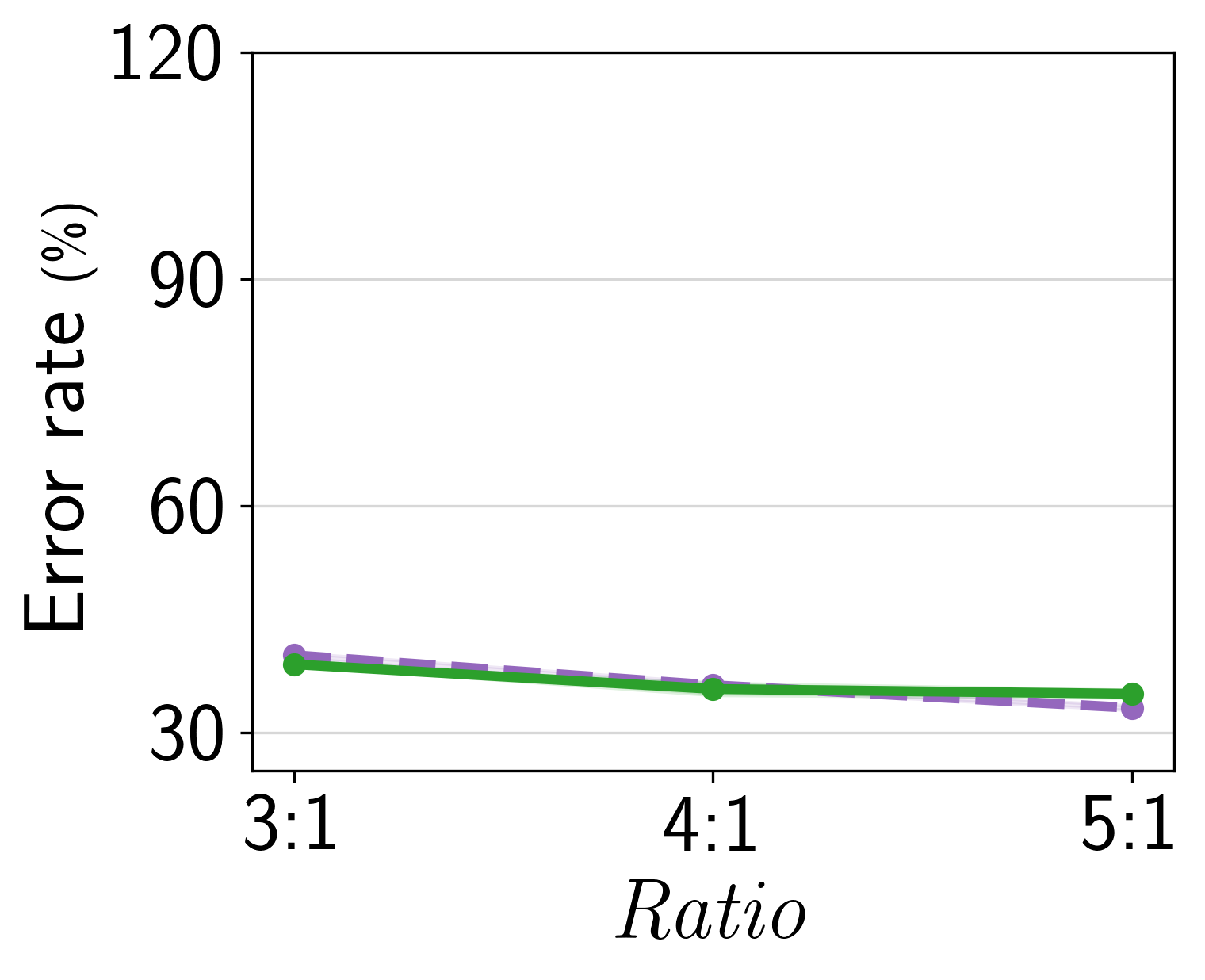}
  }
\caption{
(a) $\mathcal{R}^{\text{IDEAL}}_{A}$, (b) $\mathcal{R}^{\text{W+G}}_{A}$, and (c) $\mathcal{R}^{\text{AVG}}_{A}$ (\%) of ERM and best-performing algorithm $A^{*}_{\text{IDEAL}}$ when \textit{Scale} of SR-CMNIST is 1.
\label{c.2.1}
}
\end{figure}

\begin{figure}[hh]
  \centering
  \subfigure[]{
    \includegraphics[width=0.2\columnwidth]{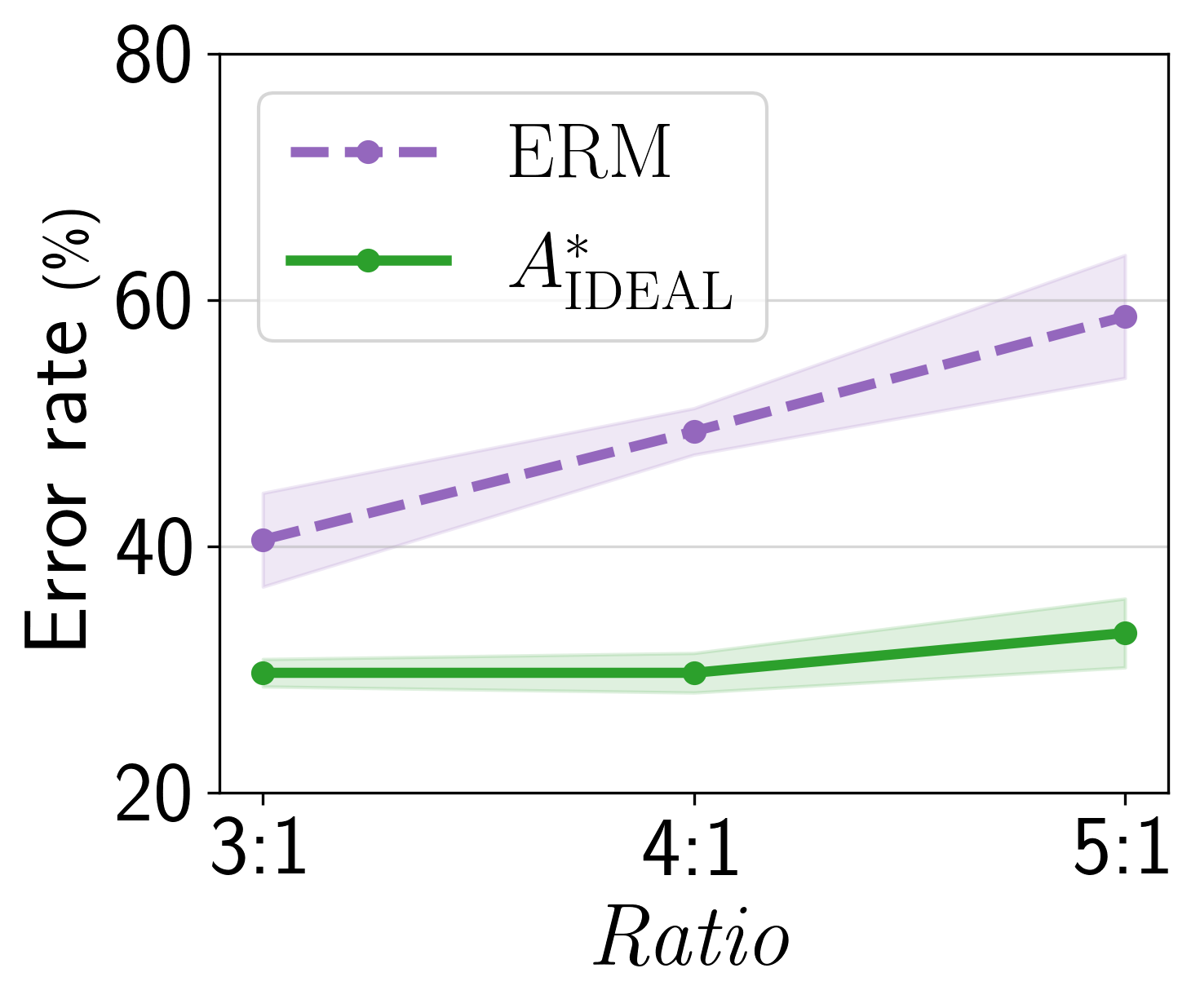}
  }
  \subfigure[]{
    \includegraphics[width=0.2\columnwidth]{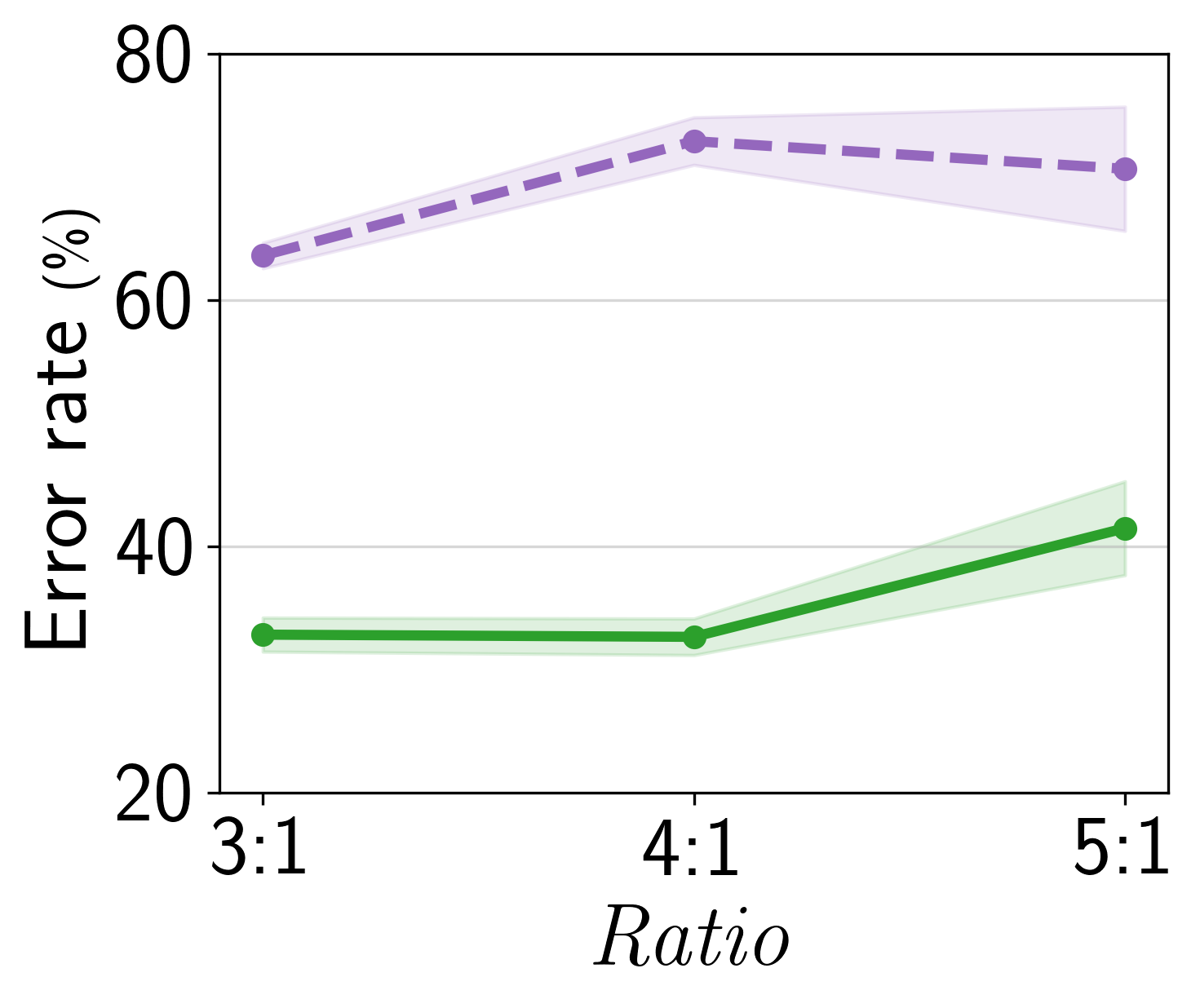}
  }
  \subfigure[]{
    \includegraphics[width=0.2\columnwidth]{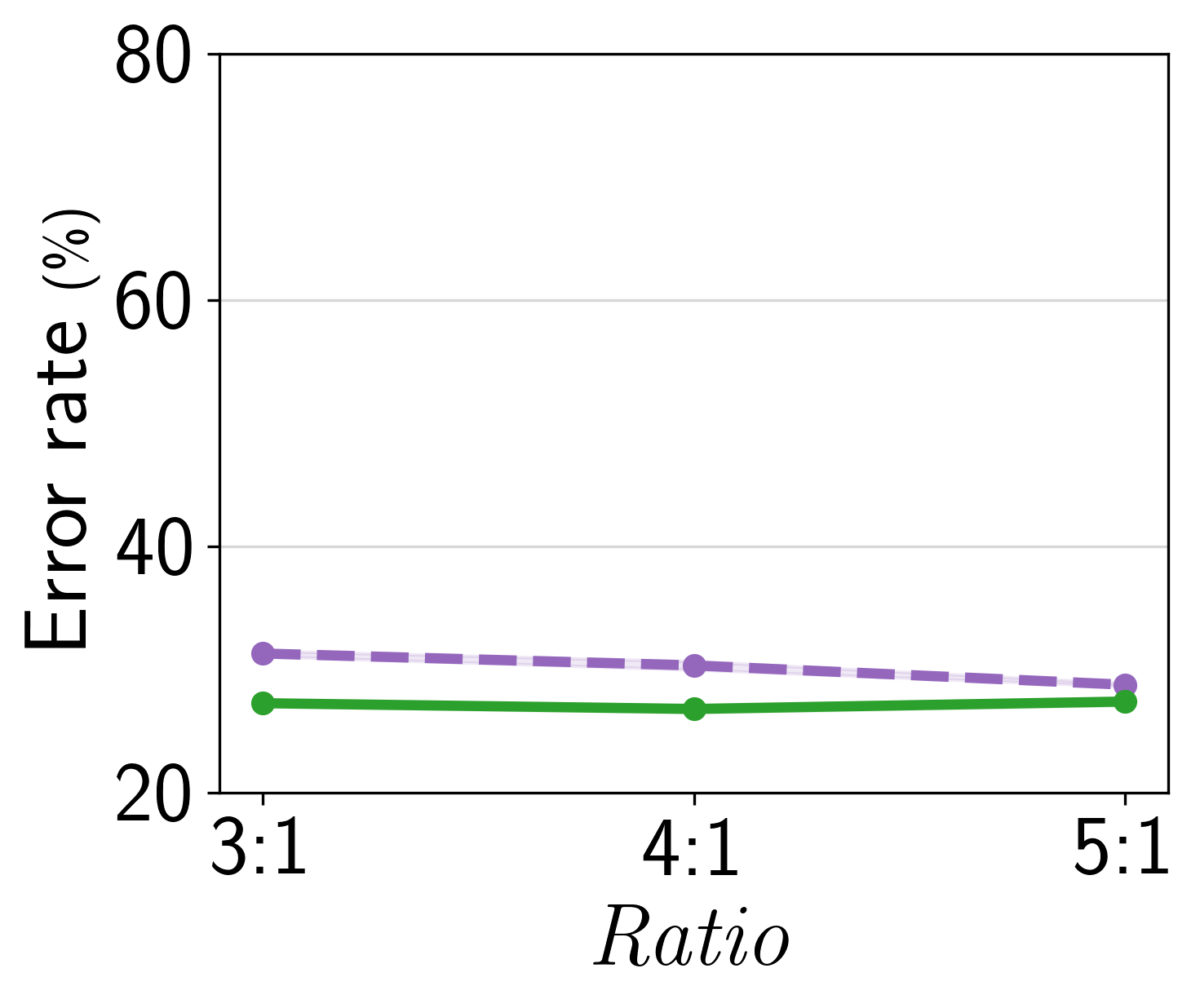}
  }
\caption{
(a) $\mathcal{R}^{\text{IDEAL}}_{A}$, (b) $\mathcal{R}^{\text{W+G}}_{A}$, and (c) $\mathcal{R}^{\text{AVG}}_{A}$ (\%) of ERM and best-performing algorithm $A^{*}_{\text{IDEAL}}$ when \textit{Scale} of SR-CMNIST is 2.
\label{c.2.2}
}
\end{figure}

\begin{figure}[th]
  \centering
  \subfigure[]{
    \includegraphics[width=0.2\columnwidth]{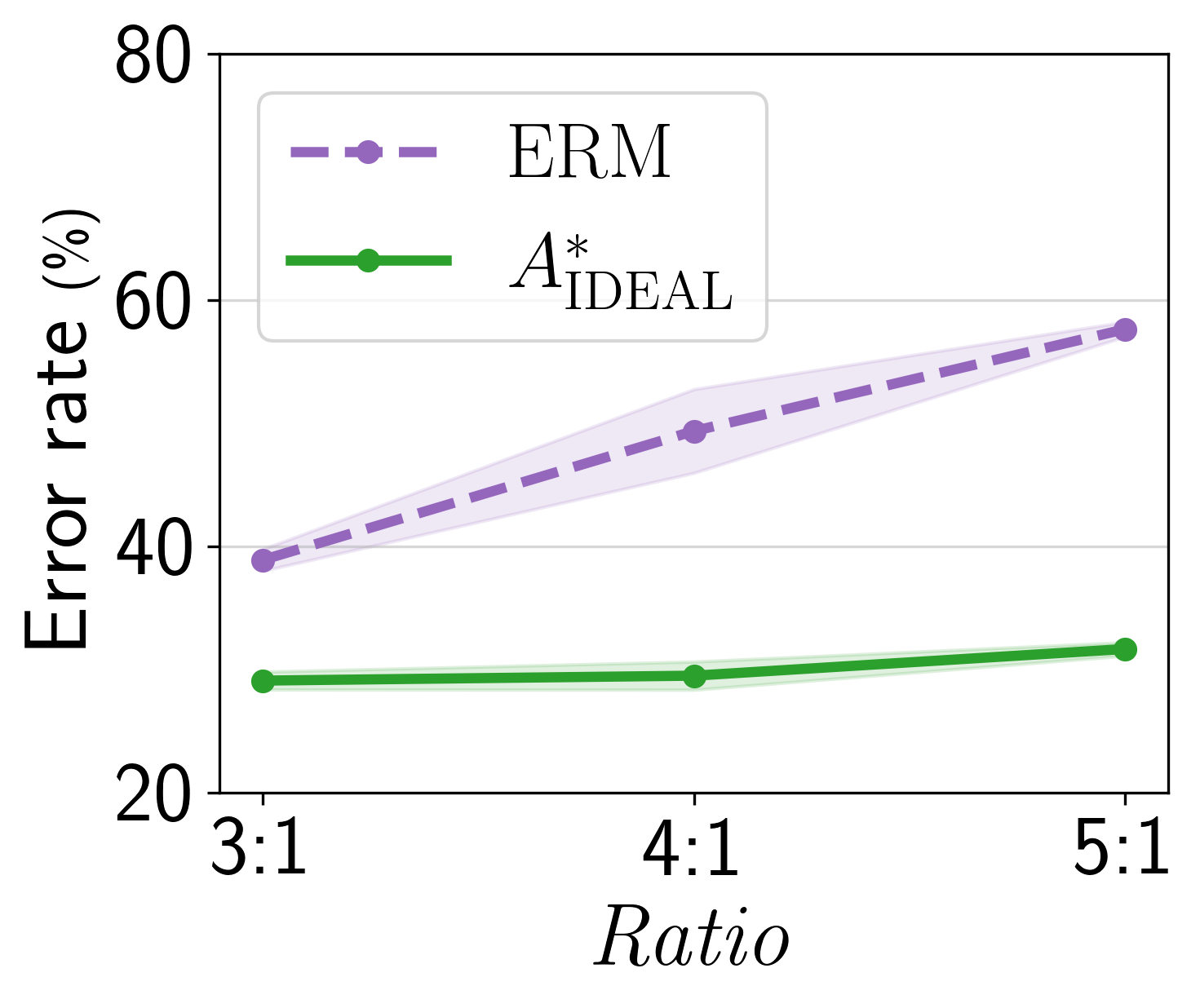}
  }
  \subfigure[]{
    \includegraphics[width=0.2\columnwidth]{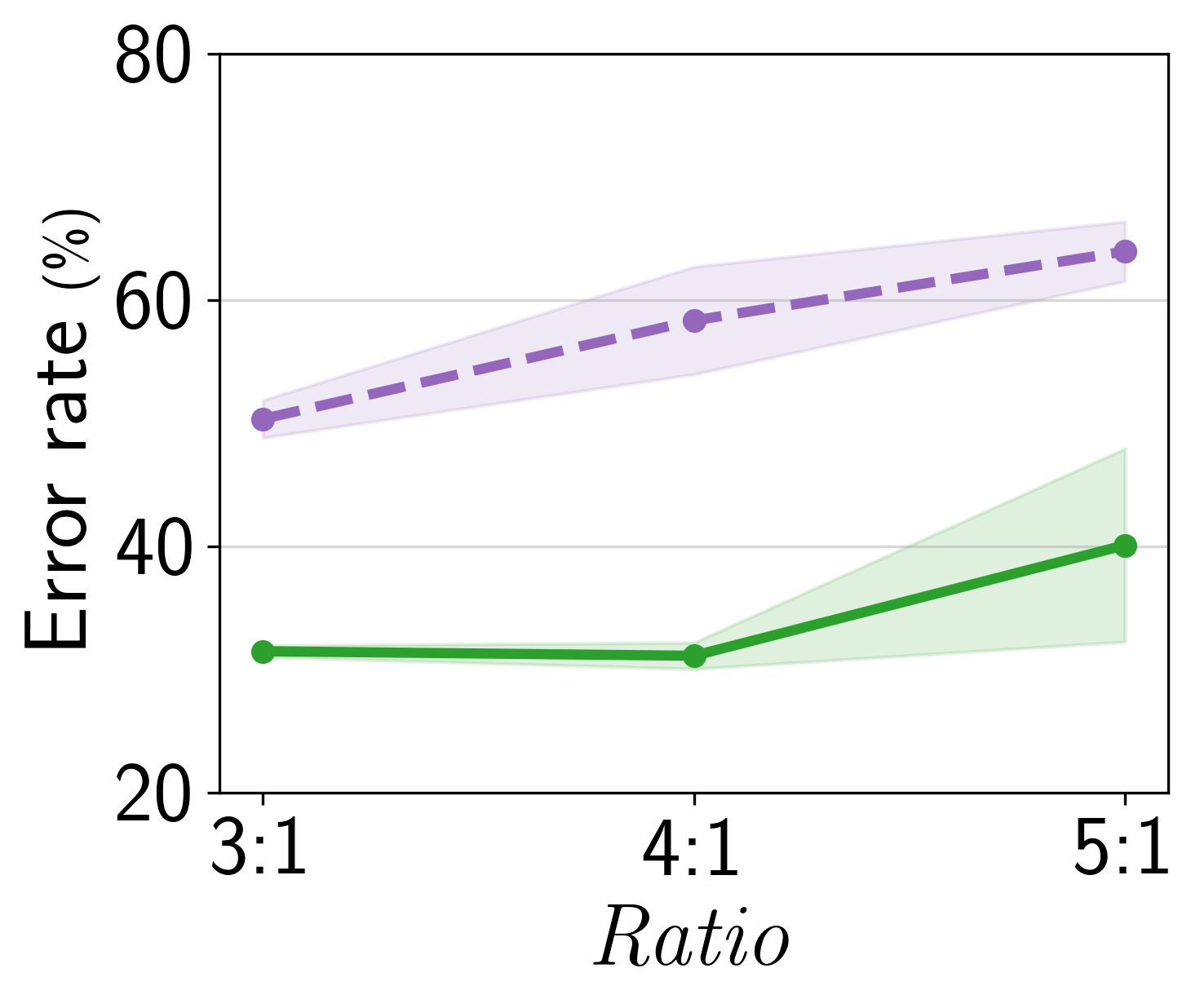}
  }
  \subfigure[]{
    \includegraphics[width=0.2\columnwidth]{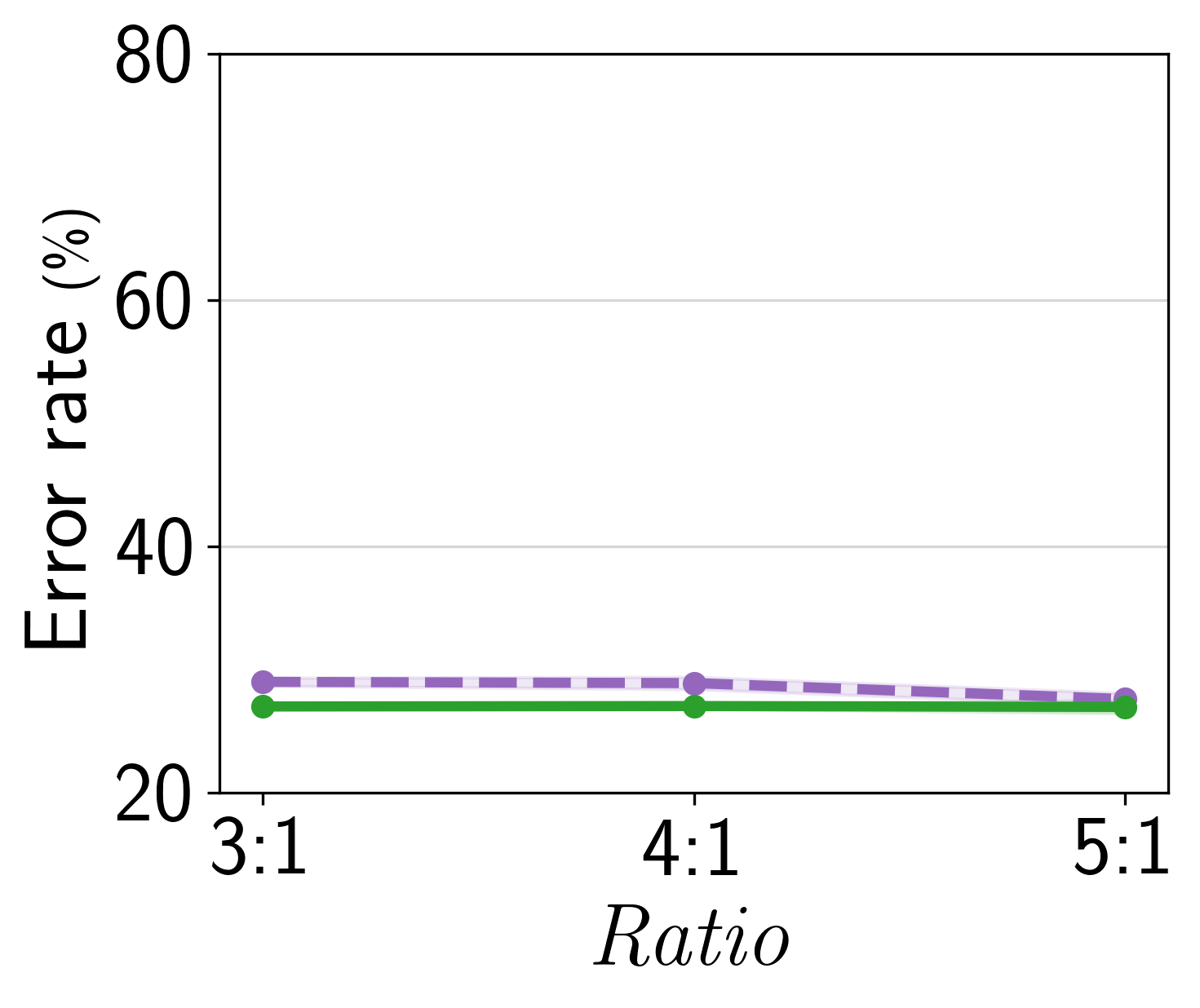}
  }
\caption{
(a) $\mathcal{R}^{\text{IDEAL}}_{A}$, (b) $\mathcal{R}^{\text{W+G}}_{A}$, and (c) $\mathcal{R}^{\text{AVG}}_{A}$ (\%) of ERM and best-performing algorithm $A^{*}_{\text{IDEAL}}$ when \textit{Scale} of SR-CMNIST is 3.
\label{c.2.3}
}
\end{figure}

\begin{figure}[th]
  \centering
  \subfigure[]{
    \includegraphics[width=0.2\columnwidth]{FIG/erm_IDEAL_4_error_rate.png}
  }
  \subfigure[]{
    \includegraphics[width=0.2\columnwidth]{FIG/erm_WG_4_error_rate.png}
  }
  \subfigure[]{
    \includegraphics[width=0.2\columnwidth]{FIG/erm_AVG_4_error_rate.png}
  }
\caption{
(a) $\mathcal{R}^{\text{IDEAL}}_{A}$, (b) $\mathcal{R}^{\text{W+G}}_{A}$, and (c) $\mathcal{R}^{\text{AVG}}_{A}$ (\%) of ERM and best-performing algorithm $A^{*}_{\text{IDEAL}}$ when \textit{Scale} of SR-CMNIST is 4.
\label{c.2.4}
}
\end{figure}

\subsection{Selected algorithms from each measure}
\label{sec:alg_low}

\begin{table}[b!]
\begin{center}
\resizebox{\textwidth}{!}{
\begin{tabular}{cccccccccccccc}
\hline
\multirow{2}{*}{Scale} & \multirow{2}{*}{Ratio} & \multicolumn{3}{c}{Seed: 0}    & \multicolumn{3}{c}{Seed: 1}    & \multicolumn{3}{c}{Seed: 2}    & \multicolumn{3}{c}{Average}  \\ \cmidrule(l){3-5} \cmidrule(l){6-8} \cmidrule(l){9-11} \cmidrule(l){12-14}
                       &                        & $A^{*}_{\text{IDEAL}}$    & $A^{*}_{\text{W+G}}$       & $A^{*}_{\text{AVG}}$      & $A^{*}_{\text{IDEAL}}$    & $A^{*}_{\text{W+G}}$       & $A^{*}_{\text{AVG}}$      & $A^{*}_{\text{IDEAL}}$    & $A^{*}_{\text{W+G}}$       & $A^{*}_{\text{AVG}}$      & $A^{*}_{\text{IDEAL}}$    & $A^{*}_{\text{W+G}}$       & $A^{*}_{\text{AVG}}$      \\ \cmidrule(l){1-2} \cmidrule(l){3-5} \cmidrule(l){6-8} \cmidrule(l){9-11} \cmidrule(l){12-14}
\multirow{3}{*}{1}     & 3:1                    & GroupDRO & VREx     & VREx     & GroupDRO & GroupDRO & RSC      & GroupDRO & GroupDRO & RSC      & GroupDRO & GroupDRO & RSC      \\
                       & 4:1                    & GroupDRO & VREx     & VREx     & GroupDRO & GroupDRO & RSC      & GroupDRO & GroupDRO & GroupDRO & GroupDRO & GroupDRO & RSC      \\
                       & 5:1                    & VREx     & GroupDRO & MMD      & GroupDRO & GroupDRO & MMD      & GroupDRO & GroupDRO & MMD      & GroupDRO & GroupDRO & MMD      \\
\hline
\multirow{3}{*}{2}     & 3:1                    & VREx     & VREx     & VREx     & VREx     & GroupDRO & GroupDRO & GroupDRO & GroupDRO & GroupDRO & VREx     & GroupDRO & GroupDRO \\
                       & 4:1                    & GroupDRO & GroupDRO & GroupDRO & GroupDRO & GroupDRO & GroupDRO & GroupDRO & GroupDRO & GroupDRO & GroupDRO & GroupDRO & GroupDRO \\
                       & 5:1                    & GroupDRO & GroupDRO & MMD      & VREx     & GroupDRO & MMD      & VREx     & GroupDRO & MMD      & GroupDRO & GroupDRO & MMD      \\
\hline
\multirow{3}{*}{3}     & 3:1                    & GroupDRO & GroupDRO & ARM      & GroupDRO & GroupDRO & VREx     & GroupDRO & GroupDRO & GroupDRO & GroupDRO & GroupDRO & VREx     \\
                       & 4:1                    & GroupDRO & GroupDRO & VREx     & GroupDRO & GroupDRO & VREx     & GroupDRO & GroupDRO & VREx     & GroupDRO & GroupDRO & VREx     \\
                       & 5:1                    & VREx     & GroupDRO & VREx     & GroupDRO & GroupDRO & MLDG     & GroupDRO & VREx     & ARM      & GroupDRO & VREx     & VREx     \\
\hline
\multirow{3}{*}{4}     & 3:1                    & GroupDRO & GroupDRO & ARM      & GroupDRO & GroupDRO & VREx     & GroupDRO & GroupDRO & GroupDRO & GroupDRO & GroupDRO & VREx     \\
                       & 4:1                    & GroupDRO & GroupDRO & ARM      & GroupDRO & GroupDRO & ARM      & VREx     & VREx     & CDANN    & GroupDRO & GroupDRO & ARM      \\
                       & 5:1                    & GroupDRO & GroupDRO & VREx     & GroupDRO & VREx     & ARM      & VREx     & GroupDRO & ARM      & GroupDRO & VREx     & ARM      \\
\hline
\end{tabular}
}
\end{center}
\caption{
The selected algorithms, $A^{*}_{\text{IDEAL}}$, $A^{*}_{\text{W+G}}$, and $A^{*}_{\text{AVG}}$, that achieve the lowest value for each measure on various \textit{Scale} and \textit{Ratio} scenarios of SR-CMNIST.
\label{tab:best-selected-algorhtm}}
\end{table}

Table~\ref{tab:best-selected-algorhtm} shows the selected algorithms, $A^{*}_{\text{IDEAL}}$, $A^{*}_{\text{W+G}}$, and $A^{*}_{\text{AVG}}$, that achieve the lowest value for each measure on various \textit{Scale} and \textit{Ratio} scenarios of SR-CMNIST.
Out of total 36 combinations (12 \textit{Scale} and \textit{Ratio} scenarios; 3 different seeds for model training), $A^{*}_{\text{W+G}}$ is the same as $A^{*}_{\text{IDEAL}}$ in 25 combinations, whereas $A^{*}_{\text{AVG}}$ is the same as $A^{*}_{\text{IDEAL}}$ in only 9 combinations.  

When a single model is chosen with the average evaluation value of the three seeds, the resulting selected algorithms can be found to be as presented in the three rightmost columns. Out of 12 \textit{Scale} and \textit{Ratio} scenarios, $A^{*}_{\text{W+G}}$ is the same as $A^{*}_{\text{IDEAL}}$ for 9 scenarios, whereas $A^{*}_{\text{AVG}}$ is the same as $A^{*}_{\text{IDEAL}}$ for only 1 scenario.

\section{Conclusions}

Average measure has been the predominant choice for evaluating domain generalization algorithms. Our study challenges this conventional approach. We have identified the limitations of the average measure and proposed a more robust alternative – the worst+gap measure. Theoretical grounds were established through the derivation of two theorems, reinforcing the significance of our proposed measure. We devised five new domain generalization datasets for the purpose of DG measure study. Our experimental results substantiate the competitiveness of the worst+gap measure over the traditional average measure. This work not only questions the current convention of domain generalization research but also provides a promising avenue for enhancing domain generalization algorithms.

\section*{Acknowledgment}
\begin{minipage}[t]{\linewidth}
This work was supported by two National Research Foundation of Korea (NRF) grants funded by the Korea government (MSIT) (No.~NRF-2020R1A2C2007139, 2022R1A6A1A03063039) and in part by Institute of Information \& communications Technology Planning \& Evaluation (IITP) grant funded by the Korea government(MSIT) [NO.RS-2021-II211343, Artificial Intelligence Graduate School Program (Seoul National University)] and [No.RS-2023-00235293, Development of autonomous driving big data processing, management, search, and sharing interface technology to provide autonomous driving data according to the purpose of usage].
\end{minipage}

\bibliographystyle{unsrtnat} 
\bibliography{mybibfile}

\end{document}